\newcommand{\R}{\mathbb{R}}
\newcommand{\N}{\mathbb{N}}
\newcommand{\OU}{Ornstein-Uhlenbeck}
\newcommand{\IQR}{interquartile range}
\newcommand{\SP}{S\&P 500}
\newcommand*{\tran}{^{\mkern-1.5mu\mathsf{T}}}
\DeclareMathOperator{\sgn}{sgn}
\DeclareMathOperator{\E}{\mathbb{E}}
\DeclareMathOperator{\1}{\mathds{1}}
\newtheorem{theorem}{Theorem}
\title{An empirical study of neural networks for trend detection in time series}
\date{\today}
\author[1]{Miot, Alexandre\thanks{\texttt{\href{mailto:alexandre.miot@sgcib.com}{alexandre.miot@sgcib.com}}}}
\author[2]{Drigout, Gilles\thanks{\texttt{\href{mailto:gilles.drigout@sgcib.com}{gilles.drigout@sgcib.com}}}\thanks{The authors would like to thank Lionel Massoulard and Sandrine Ungari for their comments and fruitful discussions}}
\affil[1,2]{Soci\'et\'e G\'en\'erale CIB}
\begin{document}
\maketitle

\begin{abstract}
Detecting structure in noisy time series is a difficult task. One intuitive feature, which is of particular interest in financial applications, is the notion of trend.
From theoretical hints and using simulated time series, we empirically investigate the efficiency of standard recurrent neural networks (RNNs) to detect trends. We show the overall superiority and versatility of certain standard RNNs structures over various other estimators.
These RNNs could be used as basic blocks to build more complex time series trend estimators.
\end{abstract}

\section{Introduction}
When looking at any dataset, human brain is wired to detect patterns \cite{Kahneman2012}.
Time series are no exception and quite naturally we see ``trends'' when shown a plot of share prices. Trends seem a relevant feature of any forecasting mechanism for time series.
In this article, we focus on univariate time series having a conspicuous trend component as commonly found in financial data. Trending time series are not unique to finance and our work extends to other domains.
The main contributions of this article are:
\begin{itemize}
\item[-] Framing the problem into a classification problem emphasizing the usefulness of simulated data
\item[-] Building a general trend estimator for a wide range of dynamics
\item[-] Showing in a simple case why RNNs are good trend estimators
\item[-] Showing empirically the superiority of RNNs over standard estimators
\item[-] Deriving theoretical maximum likelihood estimators for the considered dynamics
\end{itemize}

We first describe our general framework establishing trend detection as a sequence to sequence classification problem. We then define the time series dynamics used in our simulations. Next, we explore the use of recurrent neural networks to detect trends. Thereupon, we empirically compare performance of standard RNNs structures. We then build a general purpose trend estimator called RNN baseline. We benchmark its performance against other estimators like convolutional networks. Finally, we compare its performance against estimators based on parameter estimation (MLE) of the modelled dynamic. Mathematical topics and detailed results have been left aside in the appendix.
\clearpage
\section{Framework and data set}\label{section:timeseries}
In this section we define our framework, which basically tries to address the question: what setup should one consider to find a ``good'' general purpose estimator of trend in time series ?

\subsection{The thought process}\label{subsection:framework}
Trends can be interpreted as the slopes of a smooth function around which the time series oscillates. The simplest, and probably the closest to human intuition, would be to use piecewise linear functions as in described in \cite{Kim2009}. The issue with these filtering approaches is that they tend to be good ex post but slow to detect changes of trends. This is a real problem when the whole time series is not known in advance.
\\
We take a slightly different approach. If the future value of the time series is expected to be higher [respectively lower, equal] than the current one, then the time series is said to be trending up [respectively trending down, not trending].
At each time step, we assign a unique trend value noted $\delta_t$, the time-series is:
\begin{itemize}
	\item[-] trending downward at $t$ if $\delta_t < 0$
	\item[-] not trending at $t$ if $\delta_t = 0$
	\item[-] trending upward at $t$ if $\delta_t > 0$
\end{itemize}
We can directly translate this intuition into mathematical terms.
Consider a process $\{Y_t\}_{t>0}$ adapted to a filtration $\{\mathcal{F}_t\}_{t \geq 0}$, under some technical conditions, the Doob-Meyer theorem applies and $\{Y_t\}_{t>0}$ can be decomposed in an unique way as 
$$
\forall t\in[0, T],\quad Y_t = A_t + M_t 
$$
where $\{A_t\}_{t>0}$ is a predictable increasing [respectively decreasing, zero] process if $Y_t$ is a sub-martingale [respectively super-martingale, martingale] starting at 0 and $\{M_t\}_{t>0}$ is a martingale.
Obviously, we can map our intuitive definition to more precise concepts.\\$\{Y_t\}_{t>0}$ is:
\begin{tabular}{rcl}
	trending downward & $\Longleftrightarrow$ & $\{A_t\}_{t>0}$ is decreasing\\ 
	not trending & $\Longleftrightarrow$ & $\{A_t\}_{t>0}$ is null\\ 
	trending upward & $\Longleftrightarrow$ &  $\{A_t\}_{t>0}$ is increasing\\ 
\end{tabular}
\\
The monotonicity of the $\{A_t\}_{t>0}$ process will be our definition of the trend of $\{Y_t\}_{t>0}$ and thus a classification task with three labels $\{-1, 0, 1\}$ for downward, flat and upward trend.
Considering an It\^o process $\{Y_t\}_{t>0}$
$$
dY_t = \beta(t, Y_t) dt + \sigma(t, Y_t) dW_t
$$
where $\{W_t\}_{t>0}$ is a Wiener process. We can track the changing monotonicity of $A_t = \int_{0}^{t} \beta(s, Y_s) ds$ via the sign of $\beta(t, Y_t)$ which will be our practical definition of trend.
\\
The challenge at hand is to build an estimator of the sign of $\beta(t, Y_t)$, which will be our classification label. 
In the following, we will consider various time series dynamics where we control the sign of $\beta(t, Y_t)$. This gives us a framework to analyse the performance of various estimators, while controlling for the statistical properties of the dataset.\\
The classification task relies on the labelling of the training set. When using historical data, labelling is not easy to do: the definition of trend is subjective and usually depends on the choice of a time window or of a performance criterion.
On the contrary, when using simulated data, labelling of the training set is easy.
A general-purpose estimator of trend in a simulated environment is a useful building block for handling more complex real-life cases where no trend labels are available. It gives us a robust starting point on which we can build on\footnote{specializing using using transfer-learning for example}.



\subsection{Time series dynamics}\label{subsection:tsdynamics}
Our idea is to generate as many realistic datasets as possible, and to train trend estimators on those datasets.
If we train our estimator on a dataset rich enough to capture all the possible scenarios, we can hope to have an estimator robust to real-life conditions.
In the following, we consider three different types of dynamics, hopefully rich and diverse enough to match a lot of the real-life behaviour:
\begin{itemize}
	\item[-] a noisy piecewise linear process
	\item[-] a piecewise \OU\space process \cite{Uhlenbeck1930}
	\item[-] a Markovian switching process \cite{Hamilton1989}
\end{itemize}
The first two are piecewise meaning that we divide time into intervals on which the time series follows the chosen dynamic. A simple continuity constraint is applied to ``glue'' together these different periods.\\
In the rest of the section we define:
\begin{itemize}
	\item[-] a time interval $[0, T]$
	\item[-] for piecewise processes, a number $N$ of intervals $[t_i, t_{i+1}],\, i\in\llbracket 1, N\rrbracket$ of possibly different lengths
\end{itemize}

\subsubsection{Noisy Line Process}\label{subsection:noisyline}
We define a Noisy Line Process\footnote{or Piecewise Noisy Line} by a process $\{Y_t\}_{t\in[0, T]}$ for which
$$
\forall i \in \llbracket 1, N \rrbracket,\forall t \in ]t_i, t_{i+1}], \quad Y_t = Y_{t_i} + \mu_i (t-t_i)+ \sigma_i \epsilon_t
$$
where 
\begin{itemize}
	\item[-]$\mu_i$ is a slope parameter randomly chosen in $\left\{ -\gamma , \ldots , \frac{-\gamma}{n}, 0, \frac{\gamma}{n} , \ldots , \gamma\right\}$, where $\gamma>0$ is the maximum slope and $n\in\N^*$
	\item[-]$\sigma_i>0$ is a noise parameter
	\item[-]$\{\epsilon_t\}_{t\geq 0}$ are i.i.d. normal variables
\end{itemize}
The trend here is given by the sign of $\mu_i$. Figure \ref{fig:stylized_trajectories} displays some possible trajectories.

\begin{figure}[h]
	\centering
	\begin{subfigure}[b]{0.4\columnwidth}
		\centering\includegraphics[width=0.9\columnwidth]{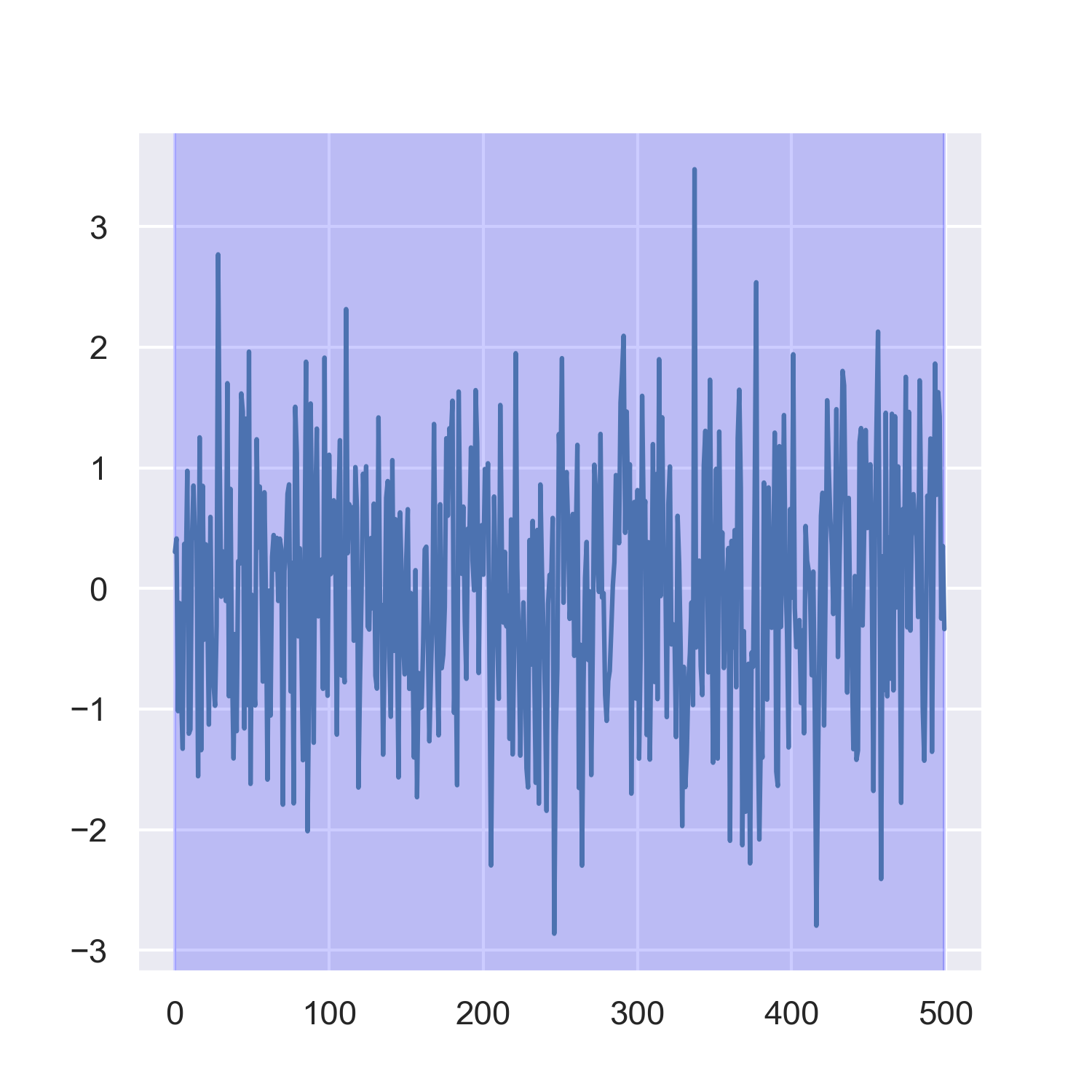}
		\caption{Flat process}
	\end{subfigure}
	~
	\begin{subfigure}[b]{0.4\columnwidth}
		\centering\includegraphics[width=0.9\columnwidth]{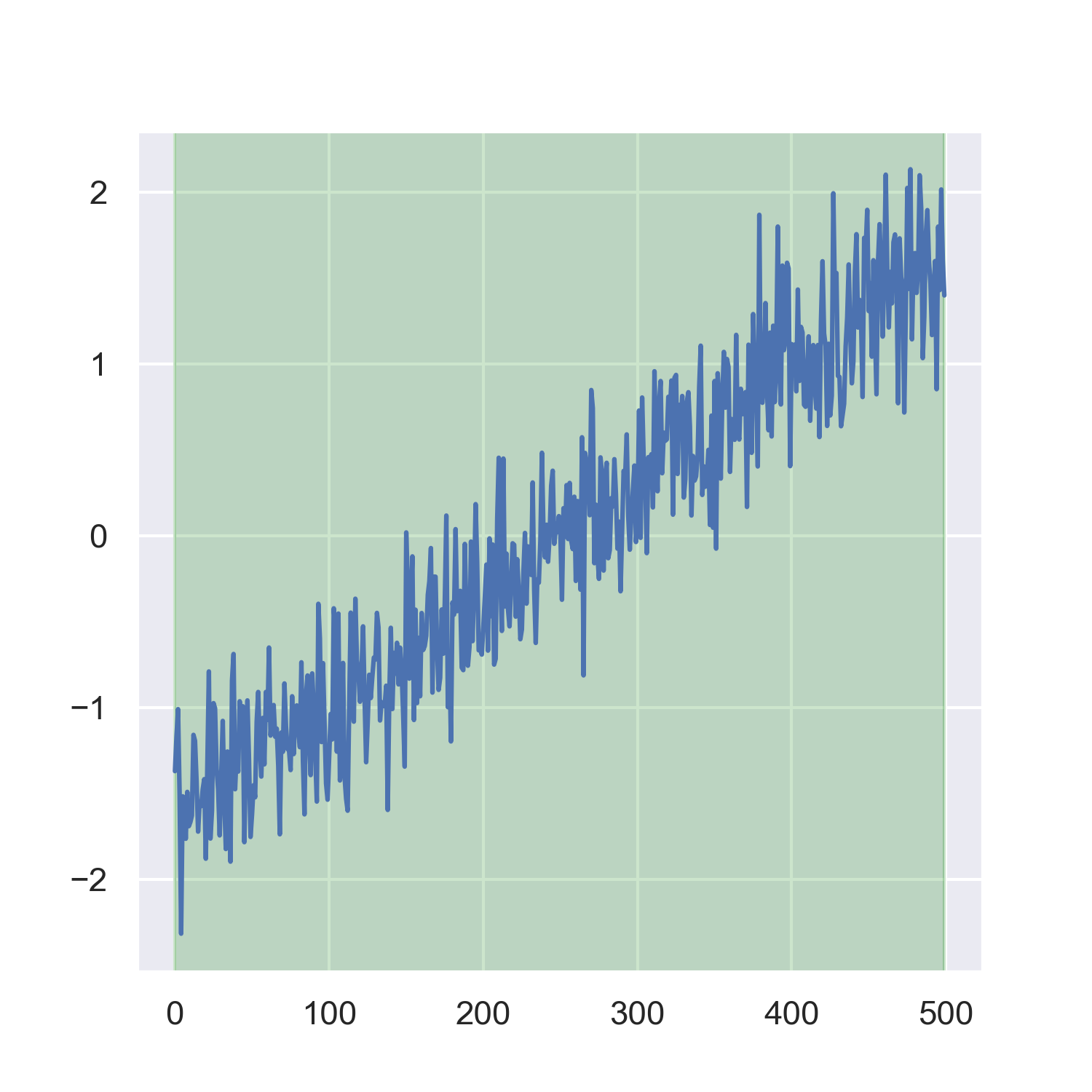}
		\caption{Trending up process}
	\end{subfigure}
	\\
	\begin{subfigure}[b]{0.4\columnwidth}
		\centering\includegraphics[width=0.9\columnwidth]{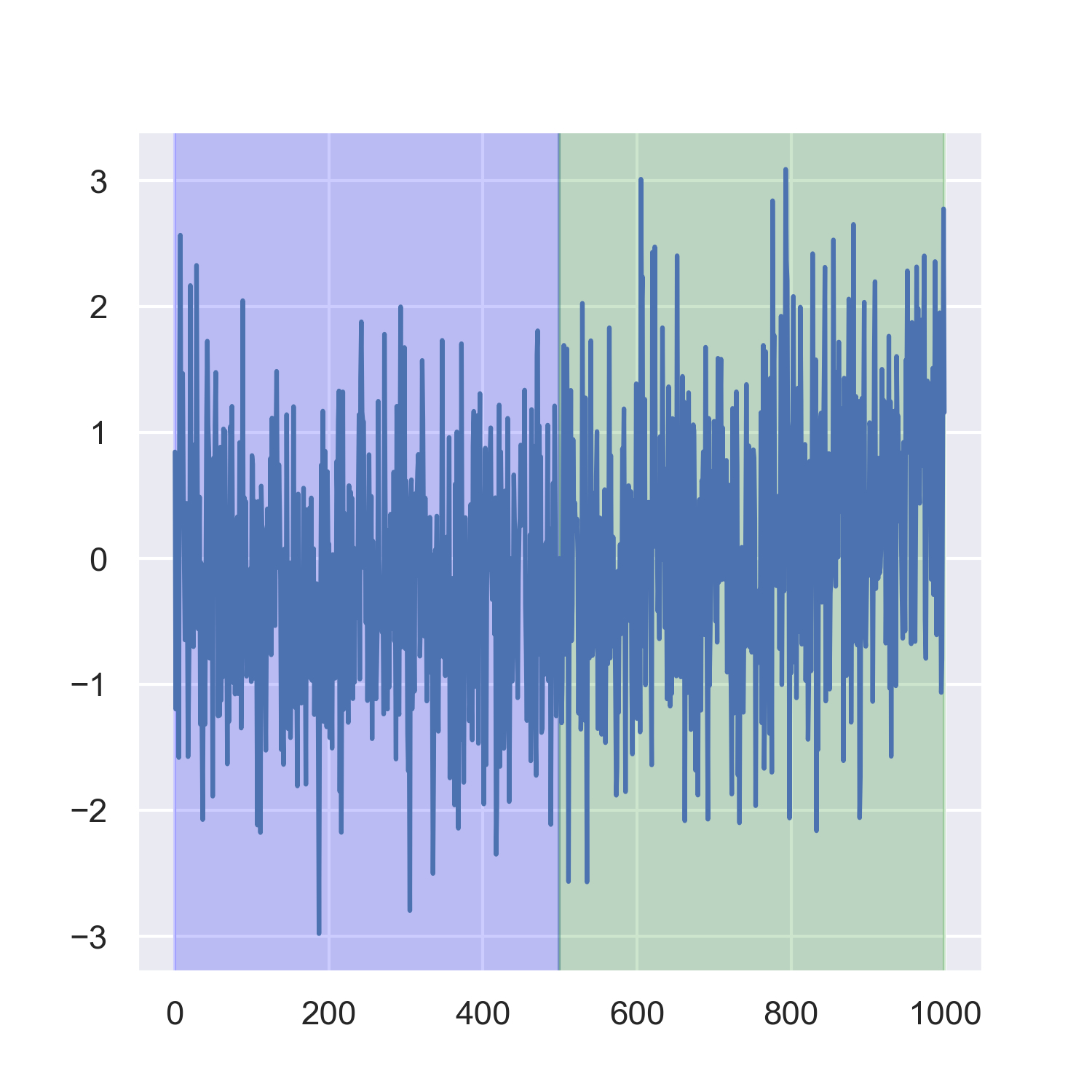}
		\caption{Two periods but very noisy}
	\end{subfigure}
	~
	\begin{subfigure}[b]{0.4\columnwidth}
		\centering\includegraphics[width=0.9\columnwidth]{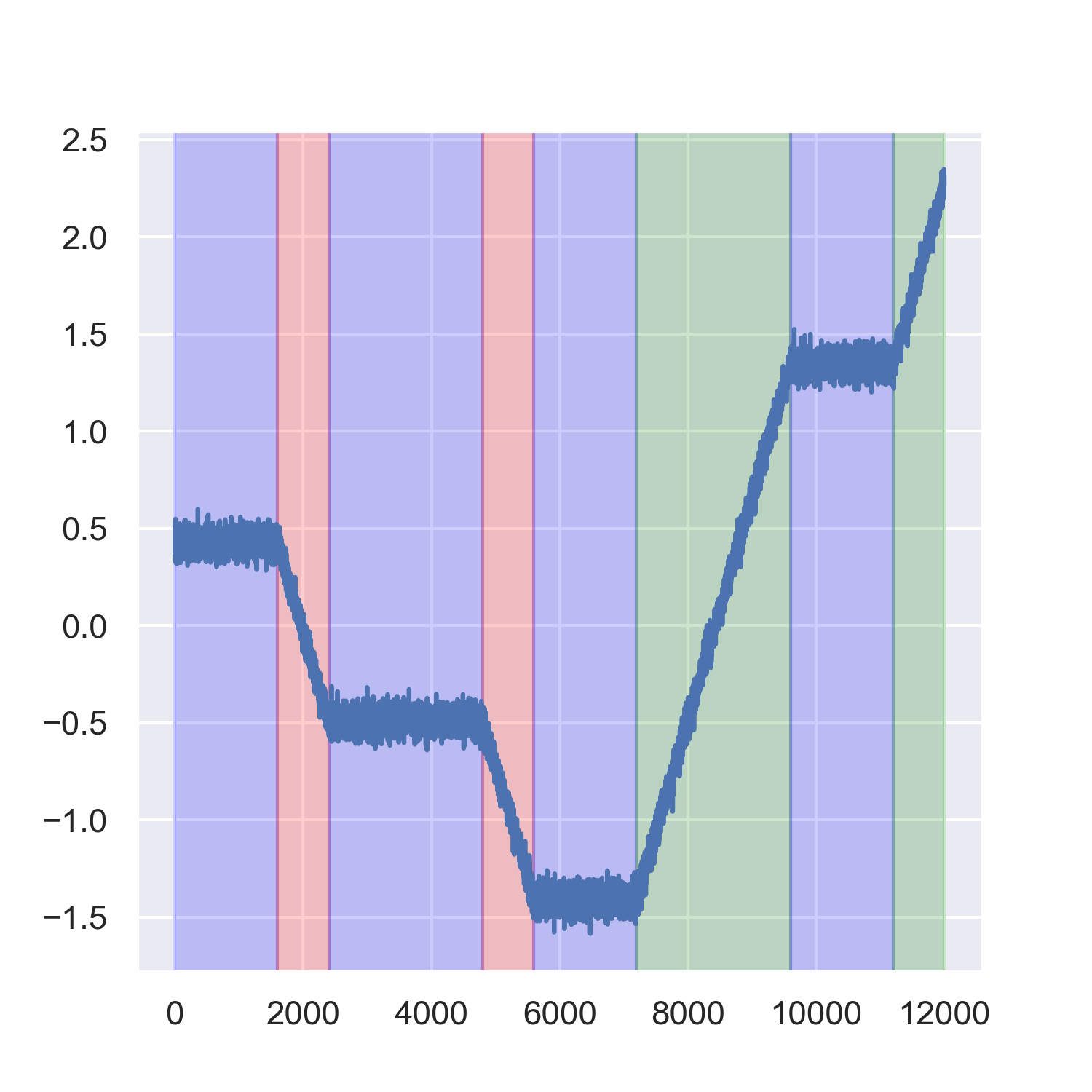}
		\caption{Several periods with less noise} 
	\end{subfigure}
	\caption{Noisy line process samples. Up in green, down in red and flat in blue}
	\label{fig:stylized_trajectories}
\end{figure}
\subsubsection{Piecewise \OU\space dynamic}\label{subsubsection:ou}
We define a Piecewise \OU\space Process as a process $\{Y_t\}_{t\in[0, T]}$ such that
\begin{multline*}
\forall i \in \llbracket 1, N \rrbracket,\forall t \in [t_i, t_{i+1}], \\
\quad Y_t = Y_{t_i} e^{-a_i(t-t_i)} + Y^\infty_i\,\left( 1-e^{-a_i(t-t_i)}\right) + \frac{\sigma}{\sqrt{2\,a_i}} W( e^{2a_i(t-t_i)}-1)\,  e^{-a_i(t-t_i)}
\end{multline*}
where $Y^\infty_i = \frac{\mu_i}{a_i}$ and $a_i\, ,\,\mu_i \geq 0$. If the intervals are big enough, $Y^\infty_i \approx Y_{t_{i+1}}$, and the trend label $l_i$ will be determined by
\begin{equation}
\frac{Y^\infty_i}{Y_{t_i}}\begin{cases}
>1, \quad l_i = +1 &\text{up trend}\\
=1, \quad l_i = 0 &\text{no trend} \\
<1, \quad l_i = -1 &\text{down trend}
\end{cases}
\end{equation}
Samples of piecewise \OU process are shown on figure \ref{fig:ou_trajectories}.

\begin{figure}[h!]
	\centering
	\begin{subfigure}[b]{0.4\columnwidth}
		\centering\includegraphics[width=0.9\columnwidth]{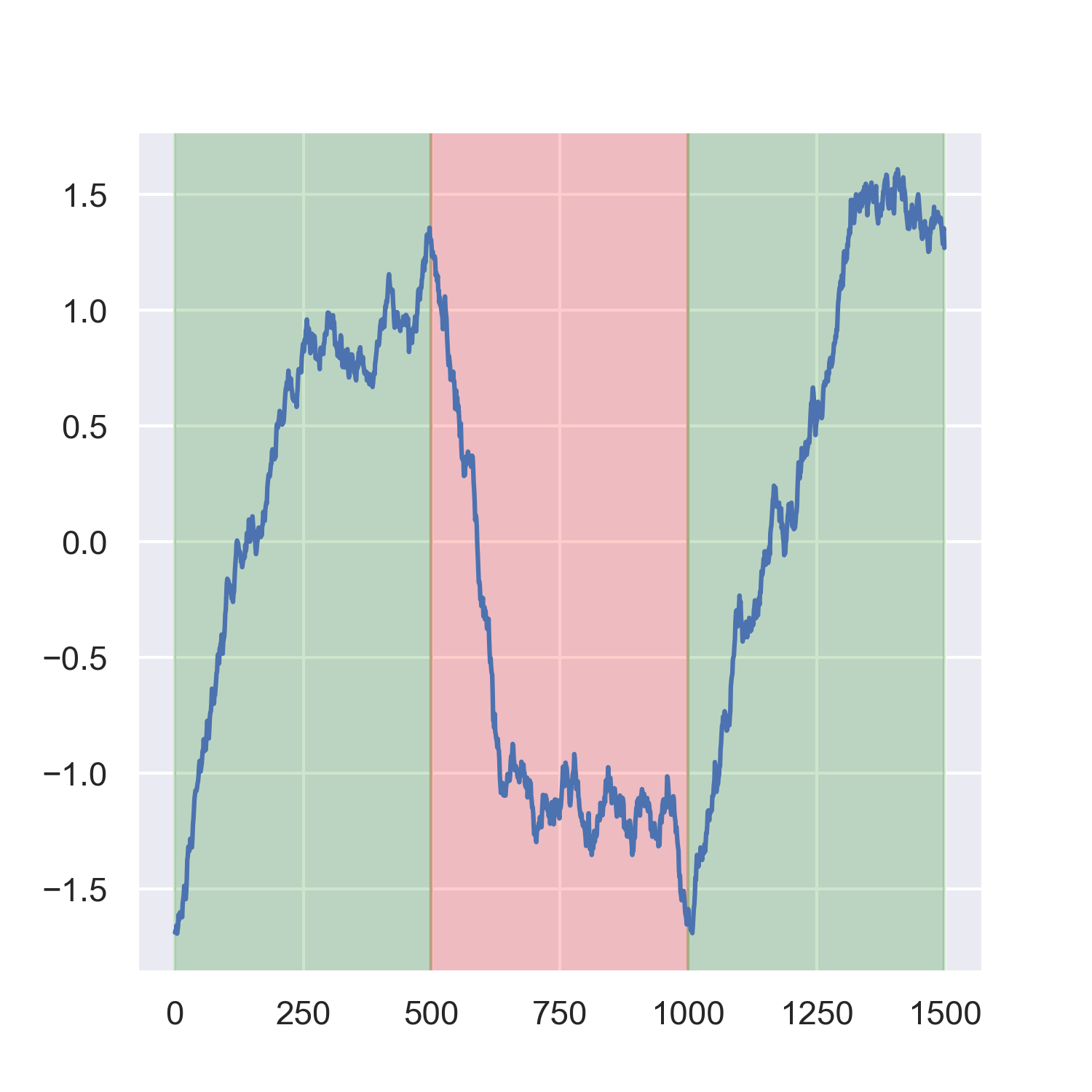}
		\caption{Three periods \OU\space process with weak ``pull''}
	\end{subfigure}
	~
	\begin{subfigure}[b]{0.4\textwidth}
		\centering\includegraphics[width=0.9\columnwidth]{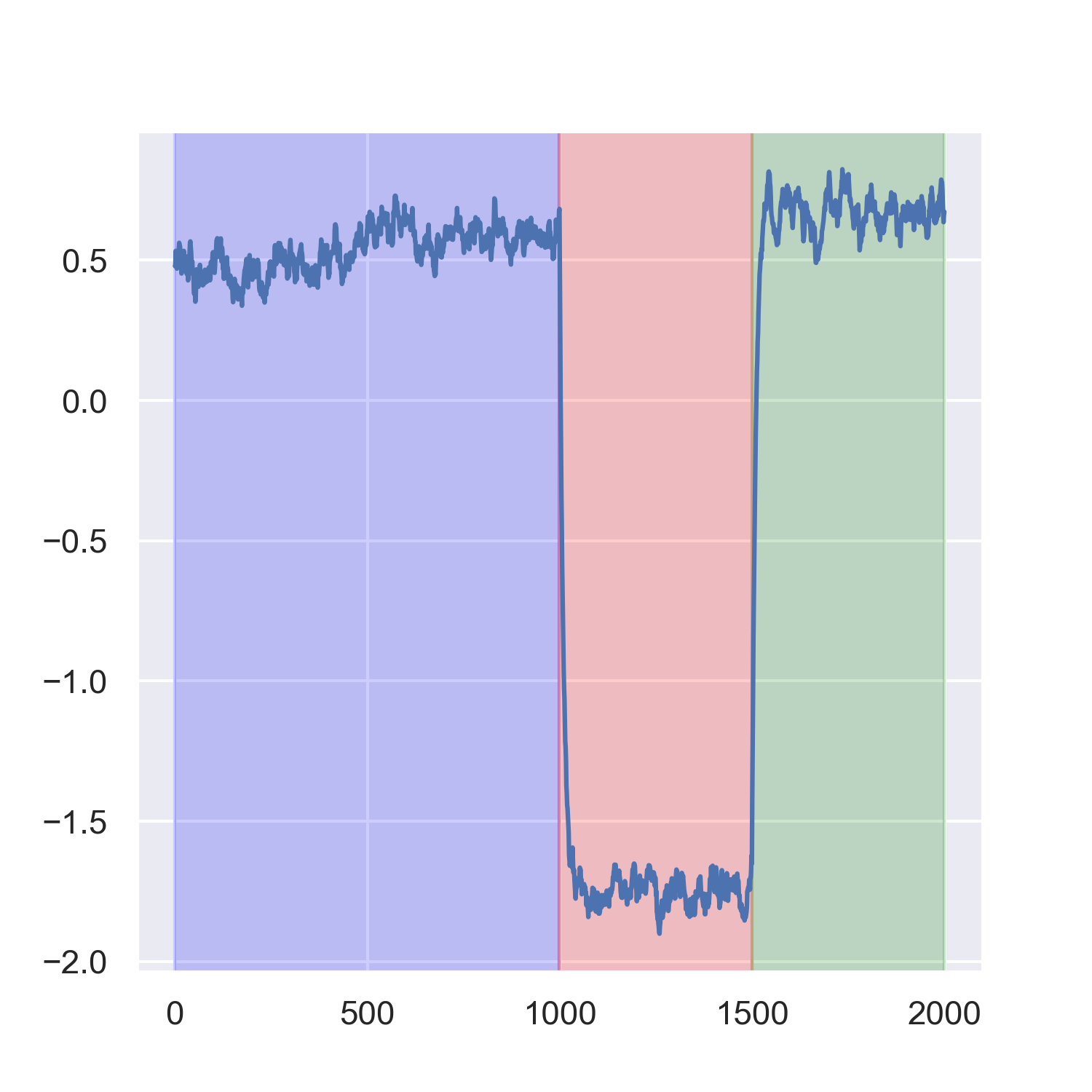}
		\caption{Four periods \OU\space process with strong ``pull''}
	\end{subfigure}
	\caption{Piecewise \OU\space processes. Up in green, down in red and flat in blue}
	\label{fig:ou_trajectories}
\end{figure}

\subsubsection{Switching Markovian dynamic}\label{subsubsection:switchingmarkovian}
The trend is given by a Markov chain $\{l_t\}_{t\geq 0}$ on finite states $\{-1, 0, +1\}$. The process $\{Y_t\}_{t\in[0, T]}$ is defined by
$$
Y_t = Y_0\exp\left(\sum_{i=1}^{t} \gamma_i \, \l_i + \sigma_t \epsilon_t \right)
$$
where $\{\gamma_t\}_{t\in[1, T]}$ is a slope process, $\{\sigma_t\}_{t\in[1, T]}$ a positive noise process and $\{\epsilon_t\}_{t\in[1, T]}\sim \mathcal{N}(0, 1)$. In practice, $\{\gamma_t\}_{t\in[1, T]}$ and $\{\sigma_t\}_{t\in[1, T]}$ are constant with time, the constant being randomly chosen in a discrete distribution.
This process exhibits a rich set of trajectories as seen on figure \ref{fig:markovian_trajectories}.

\begin{figure}[h]
	\centering
	\begin{subfigure}[b]{0.4\textwidth}
		\centering\includegraphics[width=0.9\columnwidth]{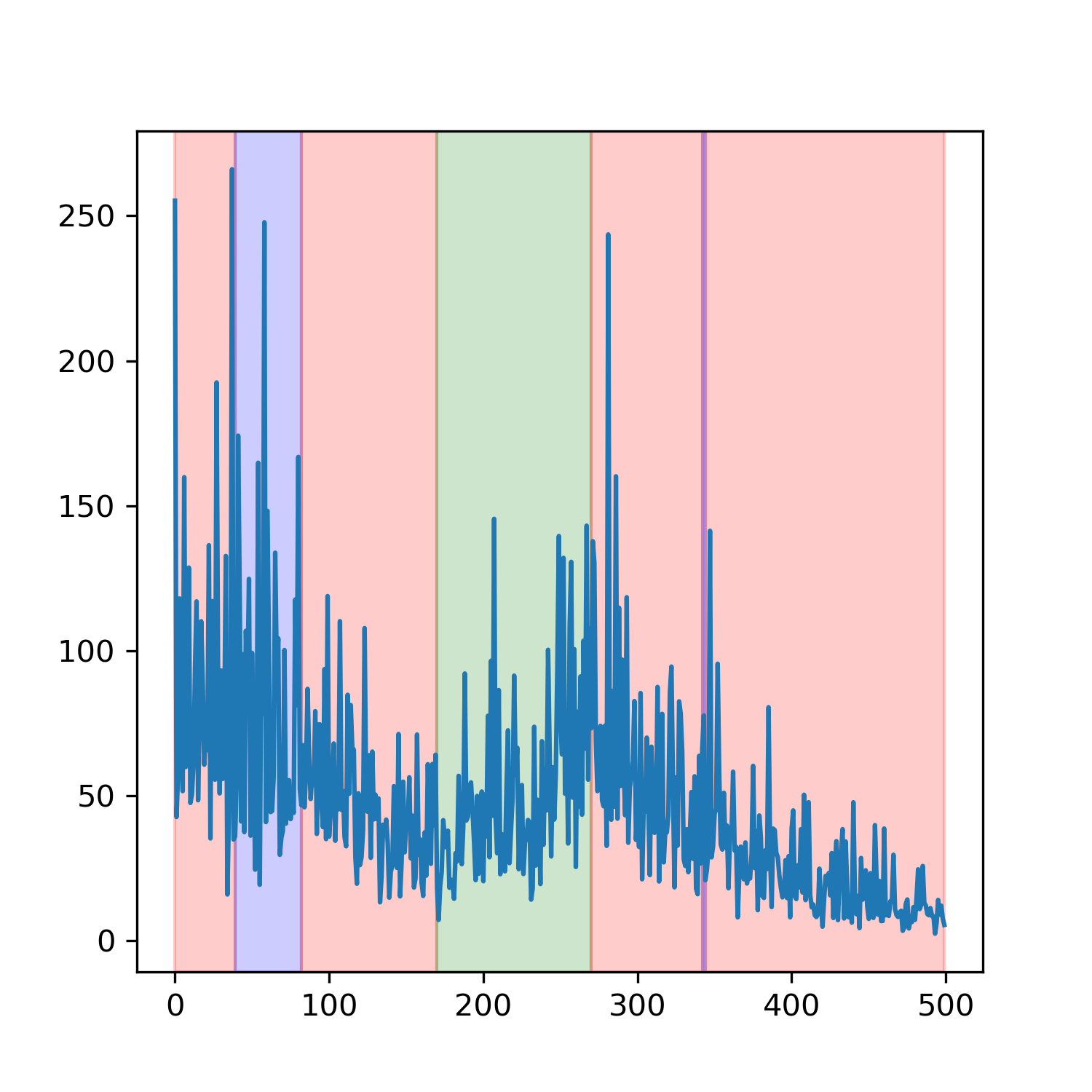}
		\caption{Trendy process with noise}
	\end{subfigure}
	~
	\begin{subfigure}[b]{0.4\textwidth}
			\centering\includegraphics[width=0.9\columnwidth]{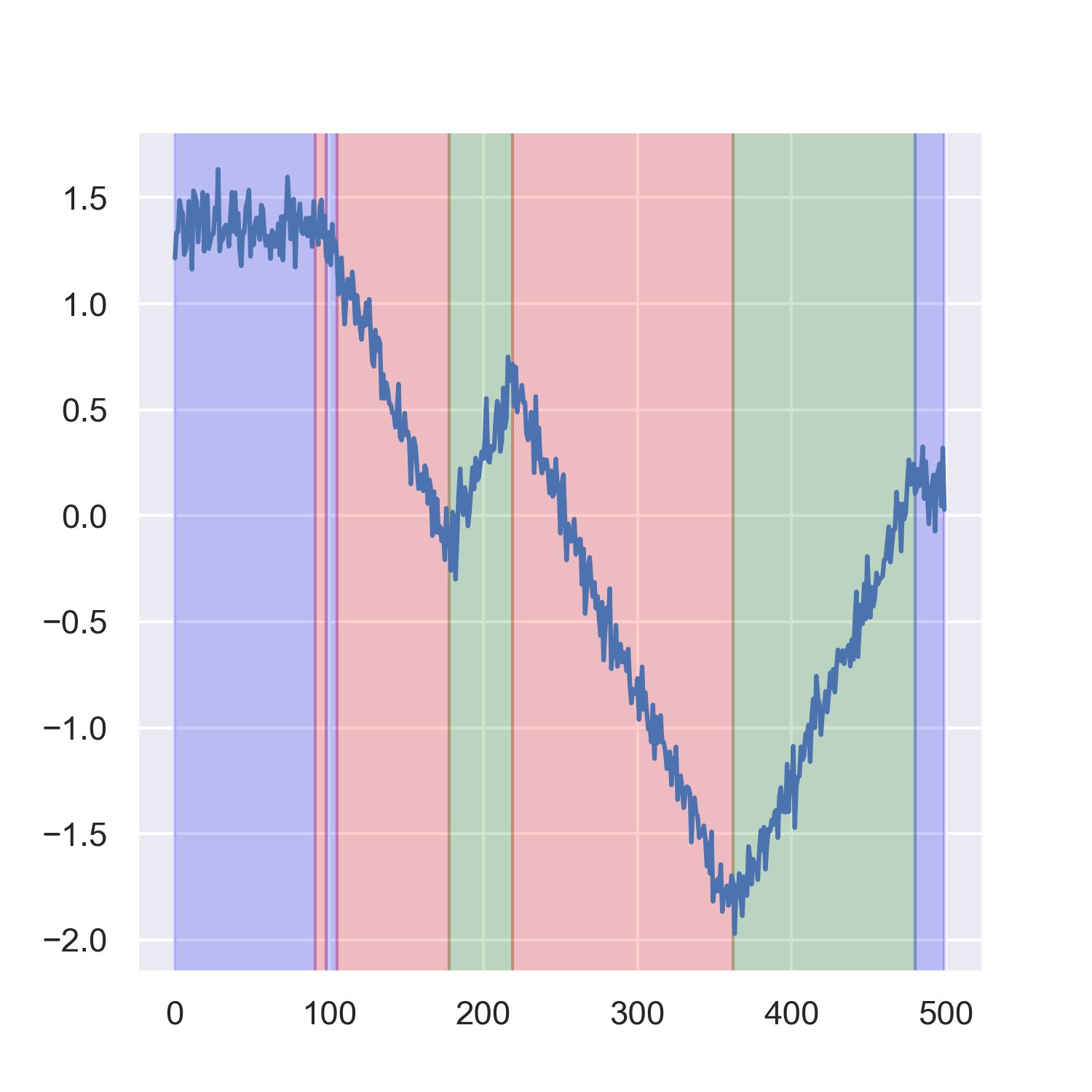}
			\caption{Trendy process with low noise}
	\end{subfigure}
	\\
	\begin{subfigure}[b]{0.4\textwidth}
		\centering\includegraphics[width=0.9\columnwidth]{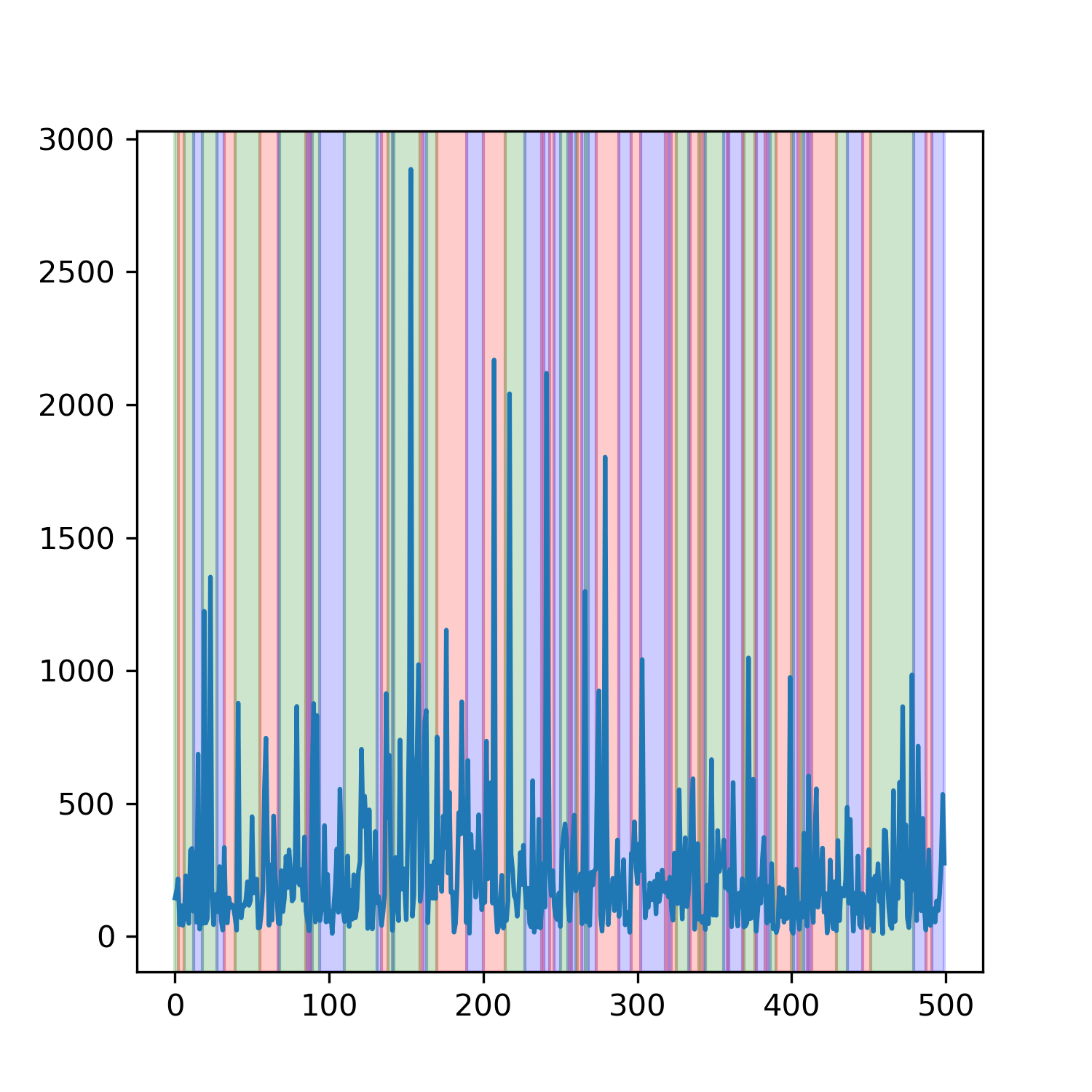}
		\caption{``Earthquake'' process}
	\end{subfigure}
	~
	\begin{subfigure}[b]{0.4\textwidth}
		\centering\includegraphics[width=0.9\columnwidth]{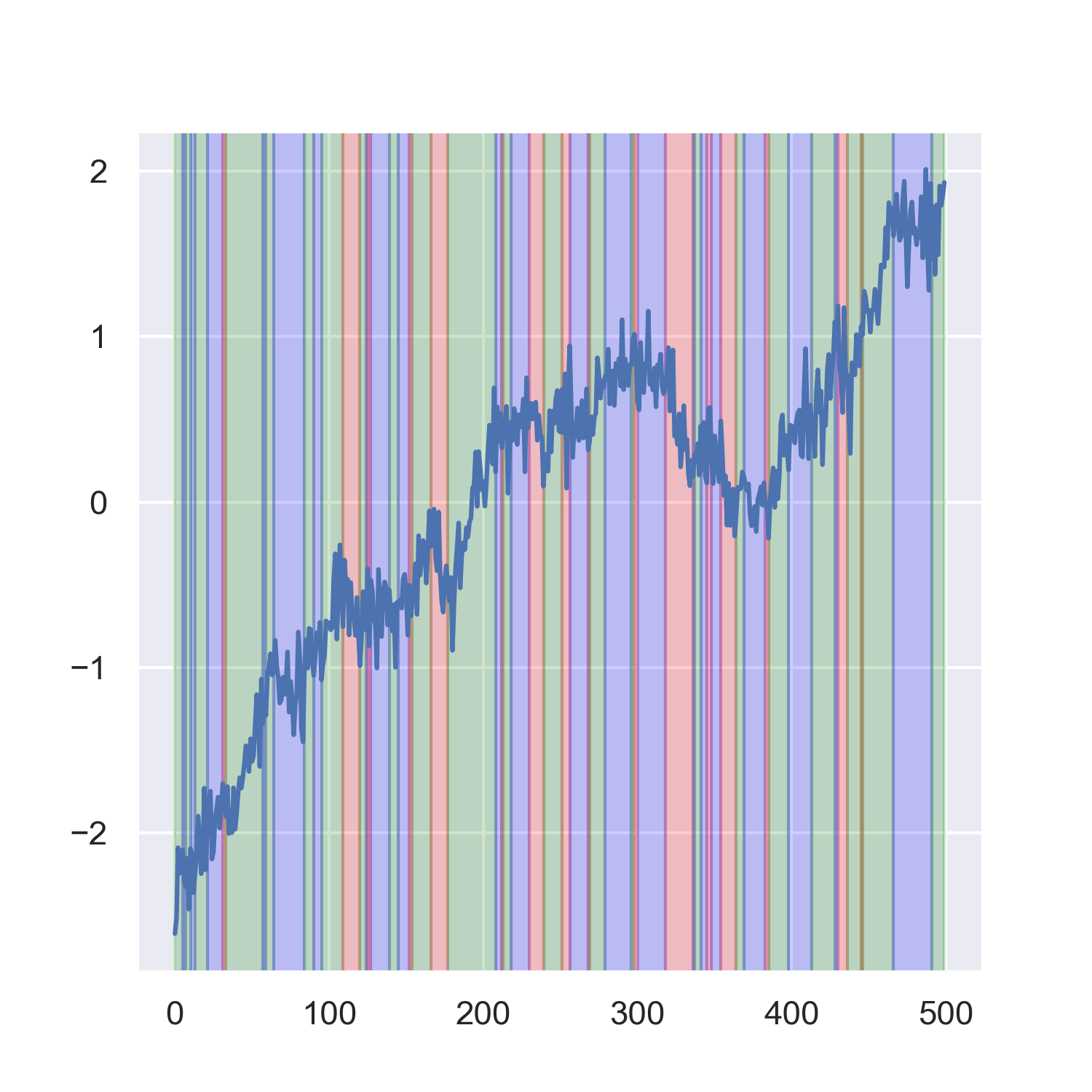}
		\caption{Rapidly changing trend} 
	\end{subfigure}
	\caption{Some trajectories from our model with a three states Markov chain. Up in green, down in red and flat in blue}
	\label{fig:markovian_trajectories}
\end{figure}

\subsection{Training and Validation sets}\label{subsection:trainingandvalidation}
Training sets are made of 1000 time series containing roughly 1000 data points, randomly drawn:
\begin{itemize}
	\item[-] from either one of the three previous dynamics (see section \ref{subsection:tsdynamics})
	\item[-] or from all of the previous dynamics. This will be named mixed dynamic in the following
\end{itemize}
Model selection is made on validation sets composed of 300 time series: 100 samples from each of the three dynamics described in section \ref{subsection:tsdynamics}. Each sample has between 500 and 1000 points depending on the dynamics and the draw. Figure \ref{fig:show_validation_samples} shows random samples from the validation set.
This validation set offers a rich set of scenarios and can be used to assess the ability of an estimator to detect trends.
Hyper-parameters are chosen using a separate test set which is a new random draw of the training set.

\begin{figure}[h]
	\centering
	\includegraphics[width=0.8\textwidth]{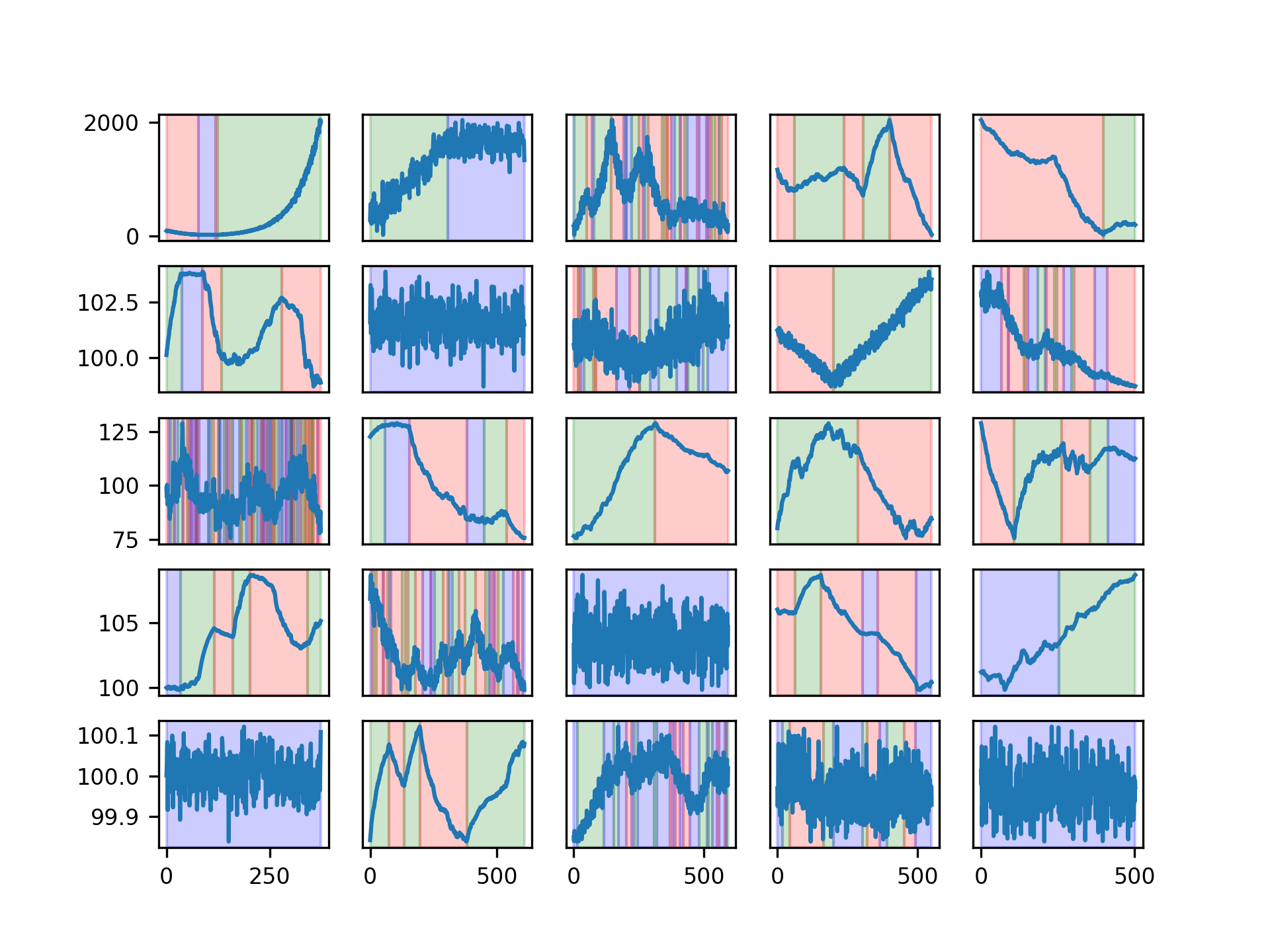}
	\caption{Some samples of a validation set}
	\label{fig:show_validation_samples}
\end{figure}
\clearpage
\subsection{From empirical data to stylised time series dynamics}\label{subsection:tsempiricaltostylized}
One important question arising from the chosen approach is the relevance of the simulated data. The dynamics can show behaviours that, even if not designed to simulate market dynamics, can be relatively similar to actual asset prices. As an example on figure \ref{fig:stylized_trajectories_vs_reality} we plot real assets daily time-series versus a random sample from our three dynamics.
\begin{figure}[h]
	\centering
	\begin{subfigure}[b]{0.4\columnwidth}
		\centering\includegraphics[width=0.9\columnwidth]{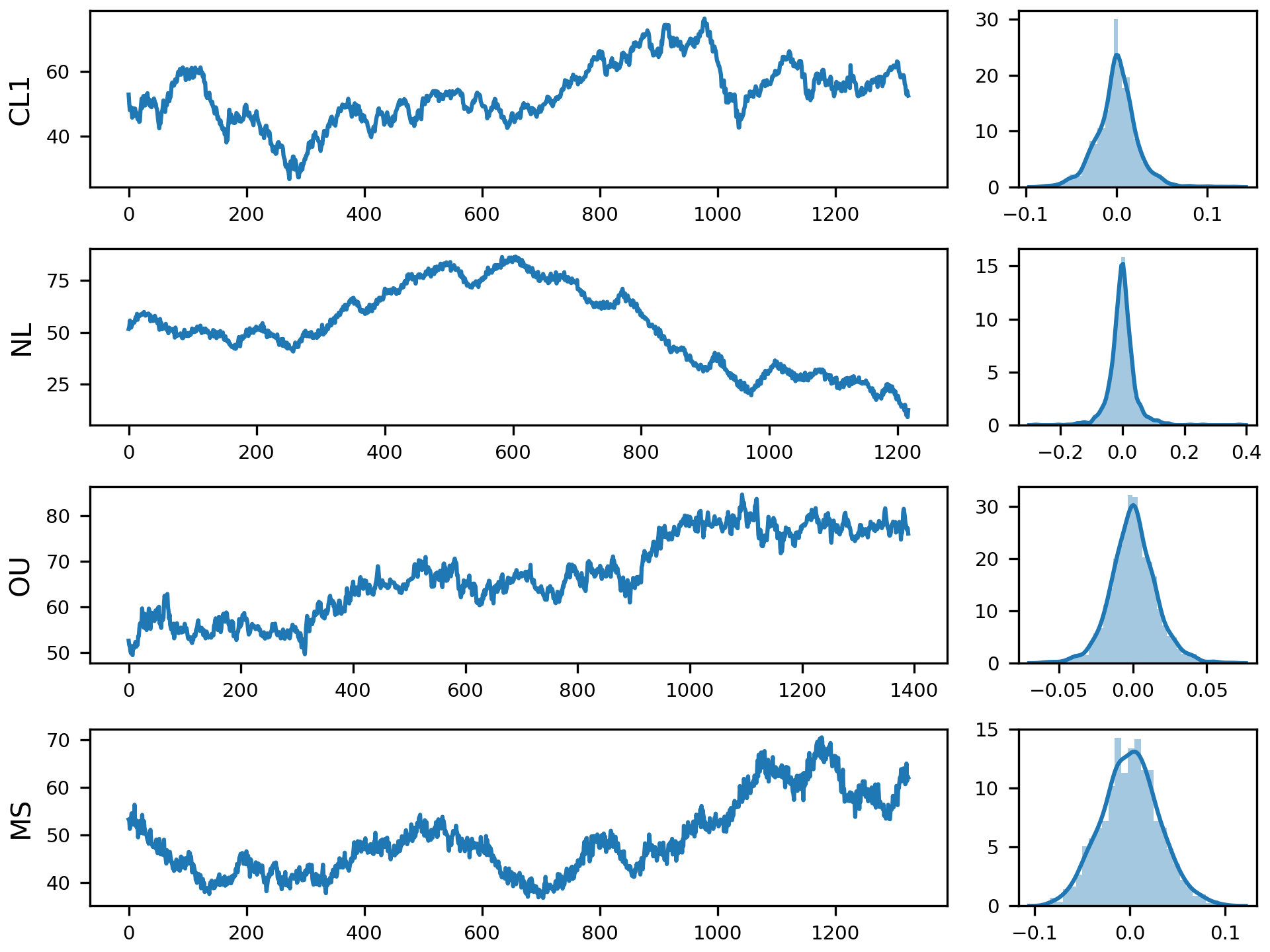}
		\caption{Oil future contract}
	\end{subfigure}
	~
	\begin{subfigure}[b]{0.4\columnwidth}
		\centering\includegraphics[width=0.9\columnwidth]{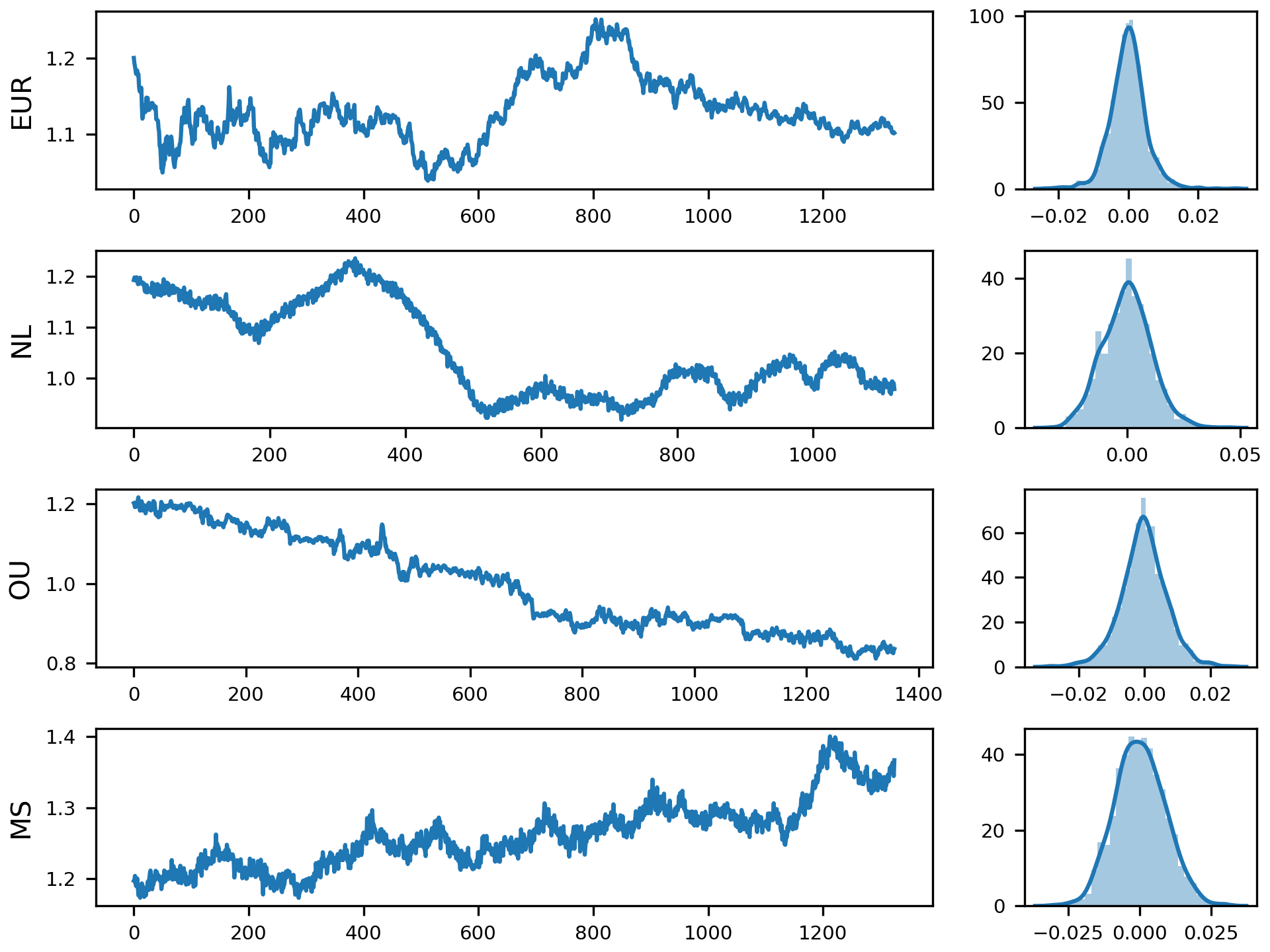}
		\caption{EUR-USD exchange rate}
	\end{subfigure}
	\\
	\begin{subfigure}[b]{0.4\columnwidth}
		\centering\includegraphics[width=0.9\columnwidth]{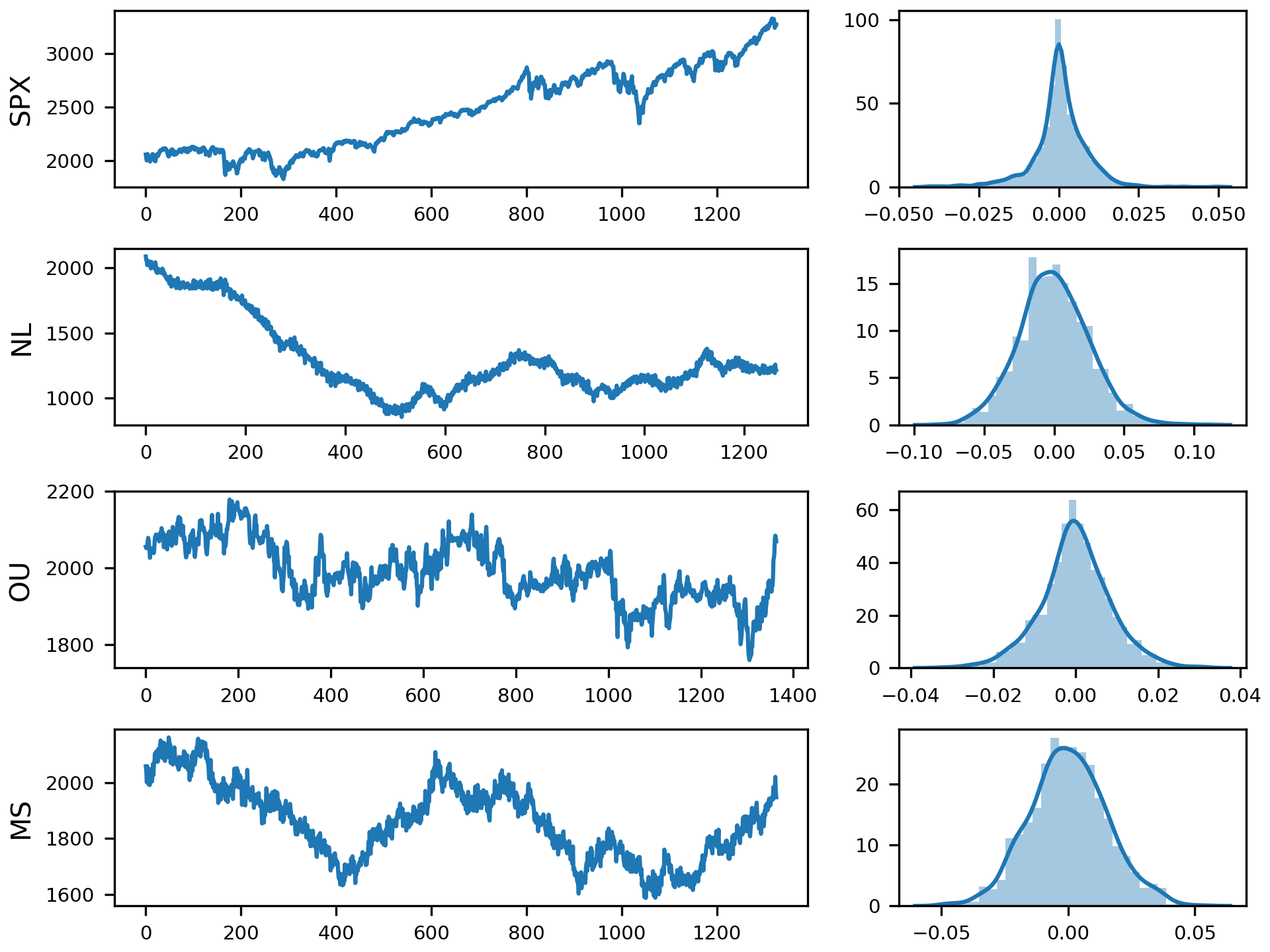}
		\caption{\SP index}
	\end{subfigure}
	~
	\begin{subfigure}[b]{0.4\columnwidth}
		\centering\includegraphics[width=0.9\columnwidth]{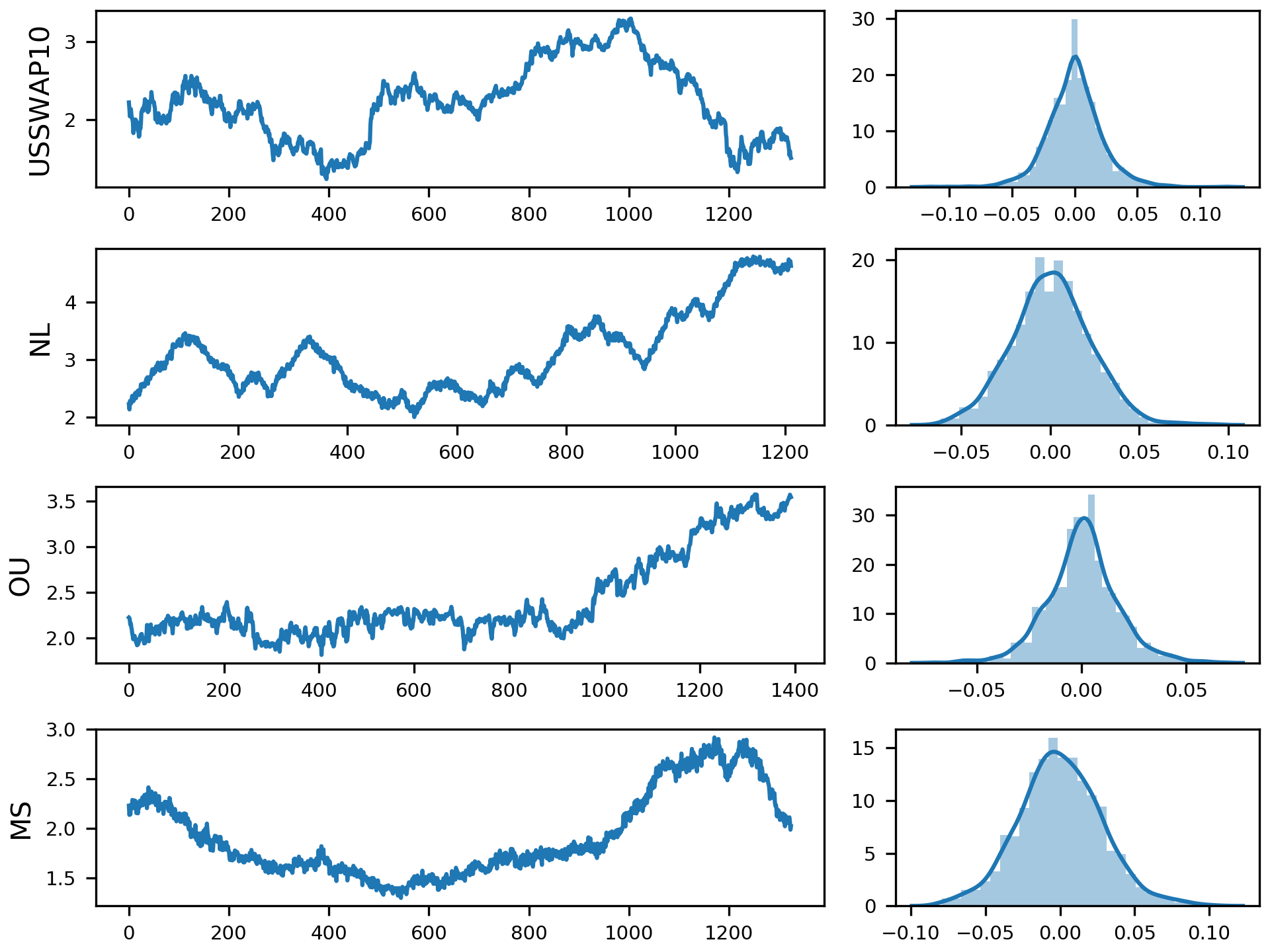}
		\caption{USD 10years swap rate} 
	\end{subfigure}
	\caption{Real assets versus various samples of simulated dynamics}
	\label{fig:stylized_trajectories_vs_reality}
\end{figure}
We see that the trajectories can be visually similar but that the distribution of daily returns may differ greatly. We must bear in mind that our aim is not to simulate market data but to detect trend defined as the sign of the drift term. We think that our dynamics are good enough to simulate this property of real time-series.
One general method to get simulated dynamics close to empirical market data is the following :
\begin{enumerate}
	\item Chose a dynamic
	\item Compute the distribution of returns of the market time series of interest
	\item Sample time-series of the dynamic and compute the distributions returns
	\item Compute the average distance between the sampled distributions and the empirical one\footnote{we used Wasserstein distance for our experiments}
	\item Minimize this function over the dynamic parameters using black-box Bayesian optimization
\end{enumerate}

\pagebreak
\section{Using Recurrent Neural Networks to detect trends}
We motivate here the use of Recurrent Neural Networks (RNN) for our classification problem. Drawing from simple intuition, we provably show their benefits in a simple case.
\subsection{Motivation : moving averages filtering and its extension as RNN}\label{subsection:motivation}
One of the most common way to detect trends is to adopt a filtering approach, comparing smoothed versions of the initial process. For example, we could aggregate several moving averages like: 
\begin{equation}
h^\alpha_t = \alpha h^\alpha_{t-1} + (1-\alpha) Y_t
\label{eqn:ma}
\end{equation}
with various values of $\alpha\in[0, 1]$. Determining the optimal $\alpha$ might be difficult if we want to build an estimator adapted to various dynamics. To circumvent this difficulty, we can aggregate the values for different $\alpha$ as the components of vectors $h_t = (h^{\alpha_1}_t,\,\ldots\, ,h^{\alpha_m}_t)$ through time\footnote{$h_t$ is going to be the hidden state of our RNN}.\\
For example, we might want to consider $h_t = (h^{\alpha_1=0.1}_t, h^{\alpha_2=0.5}_t, h^{\alpha_2=0.9}_t) \in\R^3$ concatenation of a fast, medium and slow moving averages. We might compare:
\begin{itemize}
	\item[-] the slow and the fast moving averages by looking at the sign of
	$$h^{0.9}_t - h^{0.1}_t = \begin{bmatrix}-1\\0\\1\end{bmatrix}^\intercal\cdot \begin{bmatrix}h^{0.1}_t\\h^{0.5}_t\\h^{0.9}_t\end{bmatrix}$$
	\item[-] or maybe the slow versus an average of the medium and slow with the sign of
	$$h^{0.9}_t - \frac{1}{2}(h^{0.1}_t + h^{0.5}_t)= \begin{bmatrix}-0.5\\-0.5\\1\end{bmatrix}^\intercal\cdot \begin{bmatrix}h^{0.1}_t\\h^{0.5}_t\\h^{0.9}_t\end{bmatrix}$$
	\item[-] or whatever weighted combination we fancy with the sign of
	$$0.23 \, h^{0.9}_t + 1.5\,h^{0.5}_t - 0.96\,h^{0.1}_t = \begin{bmatrix}-0.96\\1.5\\0.23\end{bmatrix}^\intercal\cdot \begin{bmatrix}h^{0.1}_t\\h^{0.5}_t\\h^{0.9}_t\end{bmatrix}$$
\end{itemize}
Generally speaking, we look at the signs of components of the vector $W \cdot h_t$ where $W$ is a given\footnote{or more probably learnt} weight matrix. The rows of $W$ define hyperplanes. The half-spaces determined by $W$ are given by the signs of the components of $W \cdot h_t$. Detecting a trend is simply trying to locate $h_t$ with regards to convex polytopes determined by these half-spaces.\\
Generalizing equation \eqref{eqn:ma} to upper dimensions, we have:
$$
h_t = W_{hh}\, h_{t-1} + Y_t \, w_{ih}
$$
where $W_{hh} \in \mathcal{M}^{+}_{m}(\R)$ is a positive matrix and $w_{ih} \in \R^m_{+}$ a positive vector such that
$$
\forall i \in [0, m]\,(w_{ih})_{i} + \sum_{j=1}^{m} (W_{hh})_{(i, j)} = 1
$$
The trend is determined by $\sgn(W\cdot h_t)$ but we could use any other activation function $f$ instead of the sign function.

These equations are exactly equal to the update equation of a RNN composed of
\begin{itemize}
	\item a vanilla RNN
	\begin{itemize}
		\item with the identity as activation function
		\item with one hidden layer
		\item with convex constraints on the weight matrix $\begin{bmatrix}W_{hh}, w_{ih}\end{bmatrix}$\footnote{which is therefore a stochastic matrix}
	\end{itemize}
	\item with a simple linear layer and activation function $f=\sgn$
\end{itemize}
Such a RNN will be called a ``convex net'' in the following. This shows that RNNs can be considered as generalizations of some basic moving average comparisons.
As a working example, we consider the case of the Noisy Line Process $Y_t = Y_0 + \mu t+ \epsilon_t$ where $\epsilon_t$ are independent noise random variables $\E(\epsilon_t) = 0$.\\For a net with constrained weights it can be shown (see annex \ref{annex:convexcell} for details):
\begin{itemize}
	\item[-] without trend, $\mu=0$, then $\{h_t\}$ becomes centered around a variable of finite variance 
	\item[-] with trend, $\mu \neq 0$ then $\{h_t\}$ diverges
\end{itemize}
If we now introduce a hyperbolic tangent activation function instead of identity:
\begin{itemize}
	\item[-] if $\mu=0$, near zero the cell is in the linear part and we should expect the state to stay bounded around the origin
	\item[-] if the trend $\mu\neq 0$ then the state should go towards $\sgn(\mu) \times \infty$ i.e. to navigate near the faces of the $]0, 1[^n$ hypercube
\end{itemize}
For a practical illustration see annex \ref{annex:hiddenstateplot}.

\subsection{Overview of RNNs and data}
\subsubsection{Standards Recurrent Neural Nets}
In subsection \ref{subsection:framework}, we turned the trend estimation problem into a sequence to sequence classification task, for which RNNs can be used.
We consider three standard structures:
\begin{itemize}
	\item[-] Vanilla RNN as defined in \cite{Elman1990}
	\item[-] LSTM as introduced in \cite{Hochreiter1997}
	\item[-] GRU as introduced in \cite{Cho2014}
\end{itemize}
RNNs contain cycles: hidden state cell can depend on the entire past input sequence.
We refer to \cite{Graves2012} for details. These three standard RNNs have different structures but they share similar update equations like:
$$
g_{t} = f \circ 
\begin{bmatrix}
W^x_{1} & W^h_{1} \\
\vdots & \vdots \\
W^x_{n} & W^h_{n} \\
\end{bmatrix}
\cdot
\begin{bmatrix}
Y_t\\
h_{t-1}
\end{bmatrix}
$$
where
\begin{itemize}
  \item[-] $g_t$ is a vector representing some internal cells at $t$
  \item[-] $f=(f_1,\ldots,f_n)$ is an block-wise activation function
  \item[-] $Y_t$ is the input at time $t$
  \item[-] $h_{t-1}$ is the state at time $t-1$
  \item[-] $W^h_{i}$ are matrices and  $W^x_{i}$ vectors 
\end{itemize}
$\circ$ is a elementwise application operator \footnote{e.g. \begin{equation*}\begin{bmatrix}f_1\\f_2\end{bmatrix}\circ\begin{bmatrix}x_1\\x_2\end{bmatrix}=\begin{bmatrix}f_1(x_1)\\f_2(x_2)\end{bmatrix}\end{equation*}} and $\cdot$ the matrix product.\\Depending on the RNN, $h_t$ is a combination of blocks of $g_t$ and possibly $g_{t-1}$.\\Essentially, $h_{t}= F(Y_t, h_{t-1})$ where $F$ is a possibly complex mapping from the previous state and actual input values to the new state. We refer the reader to \cite{Elman1990}, \cite{Hochreiter1997} and \cite{Cho2014} for more details.

\subsubsection{Training RNNs}\label{subsubsection:trainingandvalidation}
For training and validation, we use simulated time series according to section \ref{subsection:trainingandvalidation}. Our aim is to give a precise empirical comparison of these three structures taking into account the possible influence of the training dynamic. We train triplets of the form:
\begin{itemize}
	\item[-] a RNN chosen among Vanilla, LSTM or GRU
	\item[-] some meta-parameters like the number of recurrent layers, the dimension of hidden layer(s), dropout (see \cite{Srivastava2014} for definition)\ldots
	\item[-] a time series dynamic chosen among Noisy Line Process, Piecewise \OU, Markovian Switch or a mixed dynamic
\end{itemize}
Each of these triplets is trained and validated against the training and validation sets described in subsection \ref{subsection:trainingandvalidation}. This gives us more than 400 triplets to train and validate. Roughly 100 triplets do hit convergence issues in the training period and are excluded from the validation phase.
Some parameters details can be found in annex \ref{annex:rnntrainingdetails}.
Also, to get more robust results, we did a complete training using two different gradient step optimizations:
\begin{itemize}
	\item[-] Adam (see \cite{Kingma2014} for details) as it is commonly used and has some theoretical convergence properties to a stationary point (see \cite{Barakat2018} for details)
	\item[-] RMSprop algorithm (see \cite{Hinton2012} for details)
\end{itemize}

\subsection{Empirical findings}\label{subsection:empiricalfindings}
We train our triplets as described in subsection \ref{subsubsection:trainingandvalidation} for both Adam and RMSprop and validate each triplet on our 300 validation samples (see section \ref{subsection:trainingandvalidation}). The loss is a binary loss on the labels.\\
Table \ref{tab:coeffsimpleols} shows the coefficients of the linear regression of loss against binary variables indicating the training dynamic, the net type, the optimization type and the validation dynamic. Each feature is translated into binary on/off variables with one less modality. The missing modality is on if all others are set to zero. A positive coefficient means that the highlighted feature increases the average loss of the sample, and conversely, a negative coefficient decreases the average loss. Full details can be found in annex \ref{annex:rnnempiricalfindings}.~\\
\begin{table}[h]
	\centering
	\begin{tabular}{|l|c|}
		\hline
		Feature[Modality]&Coefficient\\
		\hline
		Intercept&0.48\\
		Training dynamic[Markovian Switch]&$\approx 0$\\
		Training dynamic[\OU]&0.029\\
		Training dynamic[Noisy Line]&$\approx 0$\\
		Net Type[LSTM]&0.037\\
		Net Type[Vanilla]&0.17\\
		Optimization[RMSP]&0.0234\\
		Validation dynamic[\OU]&-0.1\\
		Validation dynamic[Noisy Line]&-0.036\\
		\hline
	\end{tabular}
	\caption{Ordinary least squares (OLS) model of the loss onto the various features. Left hand column is the feature column with the specified modality in brackets. Positive coefficient means that the presence of the modality in brackets is detrimental to performance}
	\label{tab:coeffsimpleols}
\end{table}
\linebreak
From figure \ref{fig:plot_optimize_net_training}:
\begin{itemize}
	\item[-] training on \OU\space dynamic seems to worsen performance
	\item[-] GRU seems to be the best net type and Vanilla not a great choice
	\item[-] the optimization algorithm RMSProp has a negative impact on performance. Adam leads to better results
	\item[-] the validation loss for Markovian Switch is higher than the two other dynamics
\end{itemize}
\begin{figure}[h]
	\centering
	\includegraphics[width=0.8\columnwidth]{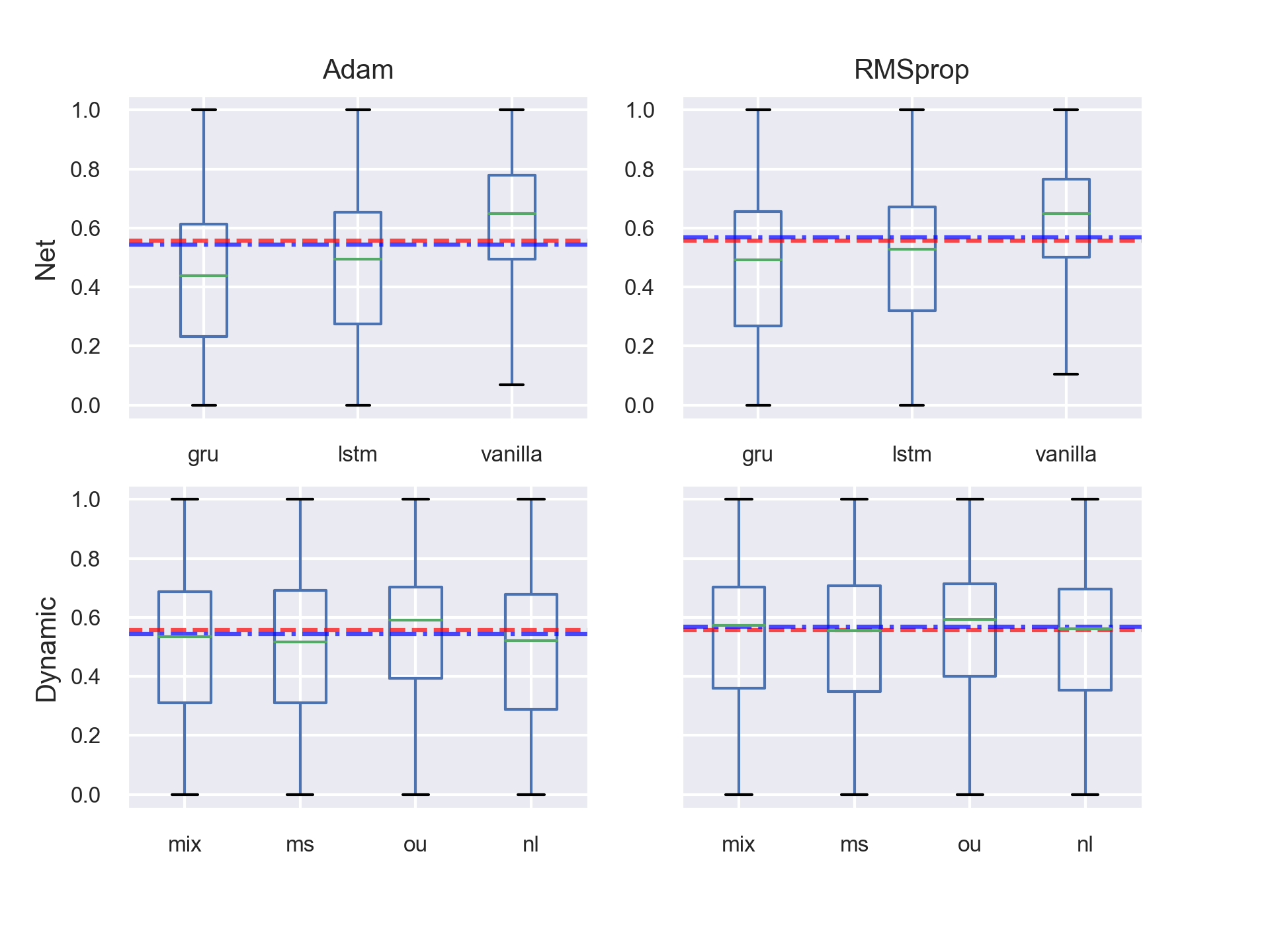}
	\caption{Box-plotting losses by optimization, net type and training dynamic. In dashed red the overall median loss, in dash-dotted blue the overall loss for a given optimization type. Dynamic of the training data is nl for Noisy Line, ou for Piecewise \OU , ms for Markovian Switch and mix for the mixed dynamic}
	\label{fig:plot_optimize_net_training}
\end{figure}
\pagebreak
\vspace{0.5em}
\emph{Training dynamic} has an impact on validation performance. Choosing two dynamics e.g. Noisy Line versus Piecewise \OU, we select data from those only and bootstrap. For each bootstrapping iteration, we compute the difference between the medians of losses of one dynamic versus the other. The result can be seen on table \ref{tab:bootstraptraining}.
Even if all intervals contain zero, and no robust conclusion can be drawn, the median loss seems lower when training using the Noisy Line or Markovian Switch dynamics.
\begin{table}[h]
	\centering
	\begin{tabular}{|l|c|@{\hskip 0.1in}p{2cm}|@{\hskip 0.1in}p{2cm}|}
		\hline
		type 1 - type 2&Median loss difference&\multicolumn{2}{c|}{1\% confidence interval}\\
		\hline
		nl - ou&-0.04&\cellcolor{red!25}-0.19& \cellcolor{blue!25}0.10\\
		nl - ms&0.01& \cellcolor{red!25}-0.15& \cellcolor{blue!25}0.17\\
		nl - mix&-0.009&  \cellcolor{red!25}-0.17&  \cellcolor{blue!25}0.15\\
		ou - ms&0.05&  \cellcolor{red!25}-0.10&  \cellcolor{blue!25}0.21\\
		ou - mix&0.04&  \cellcolor{red!25}-0.12&  \cellcolor{blue!25}0.20\\
		ms - mix&-0.02&  \cellcolor{red!25}-0.20&  \cellcolor{blue!25}0.16\\
		\hline
	\end{tabular}
	\caption{Difference of median loss for training type 1 -  median loss for training type 2 using bootstrapping percentile confidence interval. In red, negative values, blue, positive values, in confidence interval columns}
	\label{tab:bootstraptraining}
\end{table}
\vspace{0.5em}
\newline\emph{Net structure} are compared using the same bootstrapping procedure in table \ref{tab:bootstrapnet}. Vanilla RNN is consistently worse than LSTM and GRU at 99\% confidence level. As a result, in the following, we will ignore triplets with Vanilla RNN. Vanilla RNN is barely better than a dummy estimator having $\frac{1}{3}$ chance of correctly predicting the trend (see annex \ref{annex:vanillaisdummy}).
\begin{table}[h]
	\centering
	\begin{tabular}{|l|c|@{\hskip 0.1in}p{2cm}|@{\hskip 0.1in}p{2cm}|}
		\hline
		net 1 - net 2&Median loss difference&\multicolumn{2}{c|}{1\% confidence interval}\\
		\hline
		vanilla - lstm&0.14& \cellcolor{yellow!25}-0.005& \cellcolor{yellow!25}0.28\\
		vanilla - gru&0.18& \cellcolor{yellow!25}0.04& \cellcolor{yellow!25}0.32\\
		lstm - gru&0.05&-0.15&0.25\\
		\hline
	\end{tabular}
	\caption{Difference of median loss for net structure 1 -  median loss for net structure 2 using bootstrap percentile confidence interval. Highlighted in yellow the underperformance of Vanilla RNN}
	\label{tab:bootstrapnet}
\end{table}
\vspace{0.5em}
\newline\emph{Optimizer} impact: results seem to indicate a slightly better performance of Adam versus RMSprop\footnote{$\text{median loss}_{\text{Adam}} - \text{median loss}_{\text{RMSP}} \approx -0.04$ with a confidence interval equal to $[-0.27, 0.18]$}.
\vspace{0.5em}
\newline\emph{Net structure and training dynamic interaction}: using only the triplets where net structure is either GRU or LSTM, we run the same bootstrapping procedure for each datasets on the training dynamic. The results are given in table \ref{tab:trainnetinteraction}. All the intervals contain 0 and it is difficult to find a combination which does significantly better than the others.

\begin{table}[h]
	\centering
	\setlength{\tabcolsep}{3pt}
	\begin{subfigure}[b]{0.9\columnwidth}
		\centering
		\begin{tabular}{|l|c|p{2cm}|p{2cm}|}
			\hline
			type 1 - type 2&Median loss difference&\multicolumn{2}{|c|}{1\% confidence interval}\\
			\hline
				nl - ou&-0.05&-0.25&0.15\\
				nl - ms&0.002&-0.18&0.19\\
				nl - mix&-0.002&-0.19&0.19\\
				ou - ms&0.05&-0.10&0.20\\
				ou - mix&0.05&-0.12&0.22\\
				ms - mix&-0.005&-0.18&0.17\\
			\hline
		\end{tabular}
		\caption{Training bootstrap for LSTM only}
		\label{tab:trainingbootstraplstm}
	\end{subfigure}
	\\
	\begin{subfigure}[b]{0.9\columnwidth}
		\centering
		\begin{tabular}{|l|c|p{2cm}|p{2cm}|}
			\hline
			type 1 - type 2&Median loss difference&\multicolumn{2}{|c|}{1\% confidence interval}\\
			\hline
			nl - ou&-0.025&-0.21&0.17 \\
			nl - ms&0.05&-0.13&0.24 \\
			nl - mix&0.06&-0.13&0.26 \\
			ou - ms&0.08&-0.11&0.26 \\
			ou - mix&0.09&-0.08&0.25 \\
			ms - mix&0.008&-0.18&0.20 \\
			\hline
		\end{tabular}
		\caption{Training bootstrap for GRU only}
		\label{tab:trainingbootstrapgru}
	\end{subfigure}
	\caption{Interaction between the net structure GRU or LSTM and the training type Noisy Line (nl), Piecewise \OU\space (ou) or Markovian Switch (ms). The loss difference is the loss of the first element of the pair minus the loss of the second}
	\label{tab:trainnetinteraction}
\end{table}

\clearpage
\subsection{RNN baseline selection}\label{subsection:rnnbaseline}
We would like to choose a RNN estimator having a good overall performance on validation data. As we have seen, it is difficult to choose a particular training type or net structure (GRU or LSTM) as being significantly better. 
A way to build a baseline would be for example to pool the estimated probabilities of the best trained estimators. The pooling function here is a simple average of each estimated probabilities from the selected estimators\footnote{see \cite{Adamčík2014} for a justification}. And this, indeed, gives good results on validation data as can be seen in table \ref{tab:stackednetsperf}. We note little difference in performance when pooling more than five estimators.
\begin{table}[h]
		\centering
		\scalebox{0.9}{
		\begin{tabular}{|l|c|c|c|c|}
			\hline
			Validation dynamic type & Median loss & First quartile & Third quartile & IQR \\
			\hline
			Mixed& 0.22 & 0.11 & 0.39 & 0.28 \\
			\OU&  0.21 &  0.14 &  0.31 &  0.17 \\
			Markovian Switch&  0.37 & 0.21 &  0.52 &  0.31 \\
			Noisy Line& 0.11 & 0.05 &  0.23 &   0.18 \\
			\hline
		\end{tabular}
		}
		\caption{Loss and Interquartile Range (IQR) of loss for the pooled net of 5 best RNN estimators}
		\label{tab:stackednetsperf}
\end{table}

Yet, choosing such an estimator would give RNNs an advantage compared to other estimators. To be as fair as possible and favour simplicity over performance we choose to optimize hyper-parameters for a GRU network trained on the Piecewise Noisy Line dynamic using Adam optimization. Some details of the RNN baseline can be found in table \ref{tab:rnn_baseline}.\\

It is interesting to note that adding training epochs\footnote{reasonably from 100 epochs to 200. Going towards 1000 epochs for example gives a marginal improvement in performance but with increasing variance hinting for overfitting} seems to slightly increase the median error on the test set but gives a noticeable decrease of the \IQR\space by a factor near 25\%.\\
	\begin{table}[h]
		\centering
		\begin{tabular}{|l|c|}
			\hline
			Net structure type& GRU\\
			Dropout& 0.2\\
			Number of hidden recurrent layers & 2\\
			Dimension of hidden recurrent layers & 20\\
			Learning rate& 0.005\\
			Number of epochs& 200\\
			\hline
			Training type& Noisy Line\\
			Max noise level& 0.07\\
			Max line slope& 1.4\\
			\hline
		\end{tabular}
		\caption{Parameters of RNN baseline}
		\label{tab:rnn_baseline}
	\end{table}
Running the training with hyper-parameters not too far from the ones obtained by optimization gives fairly similar results. The comparison of the RNN baseline versus the pooled estimator is given in table \ref{tab:compare_pooled_net_baseline_medians} and figure \ref{fig:compare_pooled_net_baseline} for the loss distributions. Even if our RNN baseline is not the best it still offers good performance.
\begin{figure}[h]
	\centering
	\includegraphics[scale=0.5]{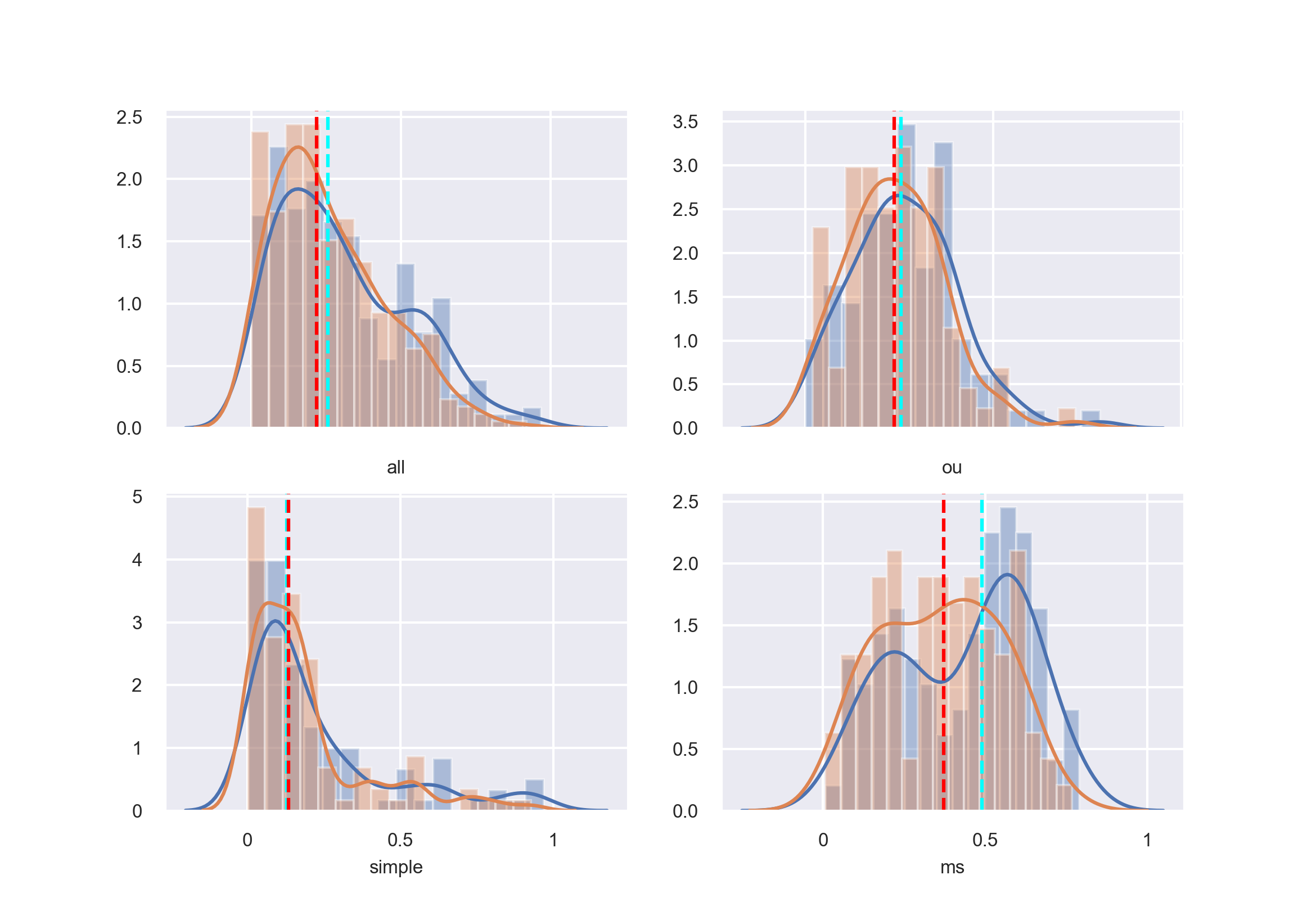}
	\caption{Comparing validation loss distribution for pooled estimator in orange with red median and RNN baseline in blue with cyan median}
	\label{fig:compare_pooled_net_baseline}
\end{figure}

\begin{table}[h]
	\centering
	\begin{tabular}{|l|c|c|}
		\hline
		Dynamic & RNN & Pooled estimator \\
		\hline
		All & 0.25 & 0.22 \\
		\OU & 0.25 & 0.24 \\
		Noisy Line & 0.13 & 0.13 \\
		Markovian Switch & 0.49 & 0.37 \\
		\hline
	\end{tabular}
	\caption{Median losses for RNN baseline or pooled estimator for various dynamics on validation set}
	\label{tab:compare_pooled_net_baseline_medians}
\end{table}
\pagebreak

\pagebreak
\section{Non model based estimation}\label{section:nonmodel}
By ``non model based'', we mean estimators which are not based on an explicit modelling of the underlying dynamic. We compare RNN baseline of subsection \ref{subsection:rnnbaseline} against a simple moving average estimator, its generalization (see section \ref{subsection:motivation}) and a Convolutional Neural Network (CNN see \cite{LeCun1998}). Overall, the RNN baseline exhibits much stronger validation performance.

\subsection{Comparison with moving average}\label{subsection:movingaveragecomp}

One of the most intuitive way to detect trend is to compare the speed of two moving averages. We compare our RNN baseline with both the most simple moving average filtering and the convex net generalization approach.
\subsubsection{Simple moving average}
We first compare the RNN baseline with a basic estimator computing two moving averages: a ''s=slow'' one and a ''f=fast'' one
$$
ma^{\text{speed}}_t  = \mu_{\text{speed}} \, ma^{\text{speed}}_t + (1-\mu_{\text{speed}}) \, x_t \quad\text{with speed}\in\text{\{s, f\}}\text{ .}
$$
Given $\epsilon>0$, a no trend threshold, the trend prediction is made by
\begin{align*}
ma^{\text{fast}}_t - ma^{\text{slow}}_t > \epsilon & \Rightarrow \text{up trend}\\
ma^{\text{fast}}_t - ma^{\text{slow}}_t < -\epsilon & \Rightarrow \text{down trend}\\
\text{otherwise} & \Rightarrow \text{no trend}
\end{align*}
Obviously, the parameters $\mu_s\,,\,\mu_f\,,\,\epsilon$ have a big impact on the estimator performance. Using Bayesian optimization we find the parameters shown in table \ref{tab:ma_baseline_parameters}.
\begin{table}[h]
	\centering
	\begin{tabular}{|c|c|}
		\hline
		Parameter & Value \\
		\hline
		$\mu_s$ & 0.95\\
		$\mu_f$ & 0.48\\
		$\epsilon$ & 0.1\\
		\hline
	\end{tabular}
	\caption{Parameters of Moving average baseline}
	\label{tab:ma_baseline_parameters}
\end{table}

On figure \ref{fig:compare_ma_net_baseline} we see the loss distribution of the baseline RNN versus the loss distribution of the moving average estimator for all dynamics.
\begin{figure}[h]
	\centering
	\includegraphics[scale=0.5]{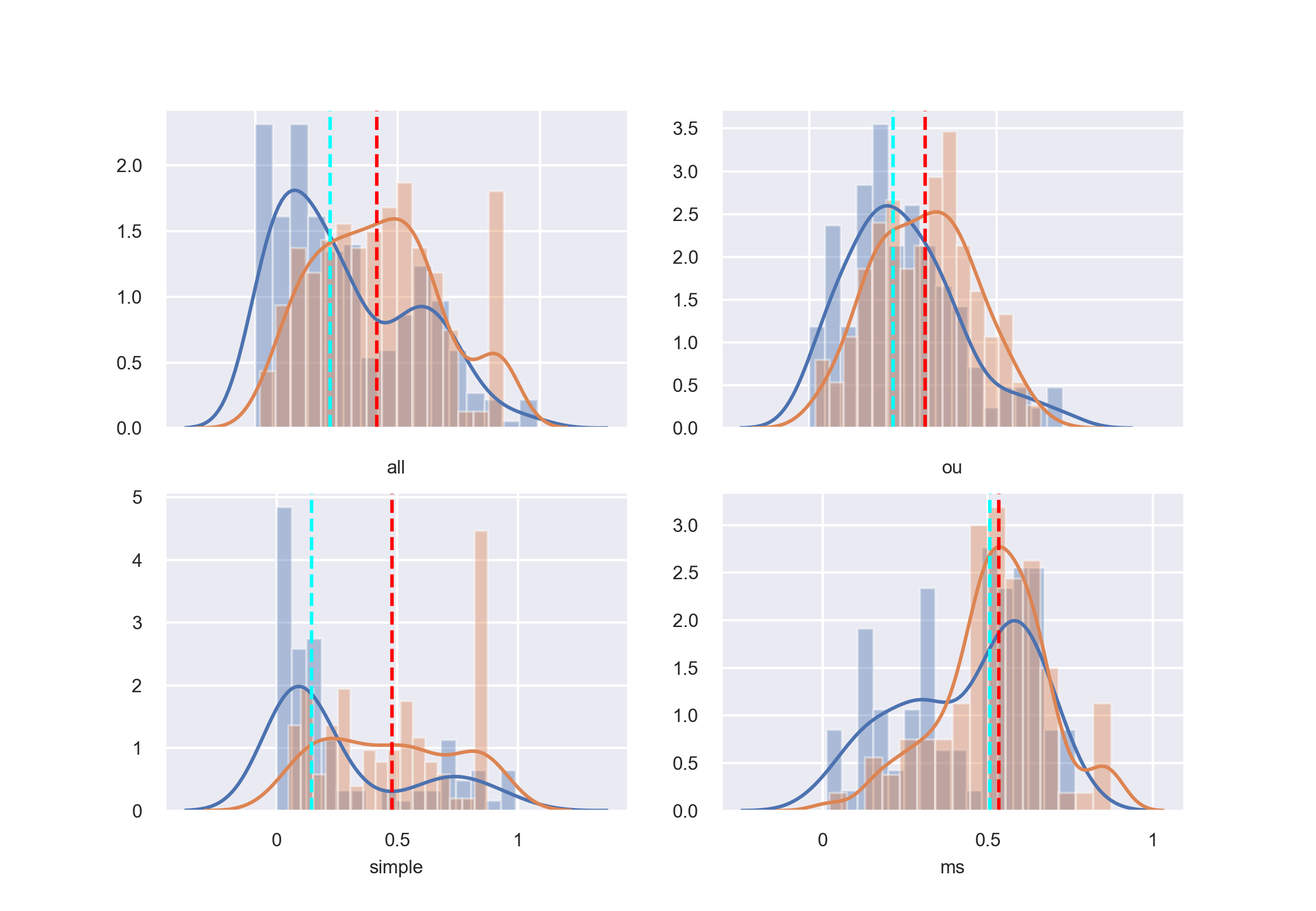}
	\caption{Comparing validation loss distribution for MA estimator in orange with red median and RNN baseline in blue with cyan median}
	\label{fig:compare_ma_net_baseline}
\end{figure}
\pagebreak
On average, the RNN baseline is consistently better than the moving average estimator as seen on table \ref{tab:compare_ma_net_baseline_medians}. The Markovian Switch dynamic is sometimes extremely difficult to apprehend due to highly volatile regime switching. For this dynamic, we see that both estimators are equally bad which is not unexpected given the task difficulty.

\begin{table}[h]
	\centering
	\begin{tabular}{|l|c|c|}
		\hline
		Dynamic & RNN & MA \\
		\hline
		All & 0.26 & 0.43 \\
		\OU & 0.23 & 0.31 \\
		Noisy Line & 0.14 & 0.48 \\
		Markovian Switch & 0.51 & 0.53 \\
		\hline
	\end{tabular}
	\caption{Median loss for RNN or MA estimator for various dynamics on validation set}
	\label{tab:compare_ma_net_baseline_medians}
\end{table}

\subsubsection{Comparison with moving average generalization}
We compare the baseline RNN with the estimator built according to subsection \ref{subsection:motivation}. Basically, this is a Vanilla RNN without any activation function. Also, weights are constrained to be a stochastic matrix. It turns out, a bit surprisingly to us, that the performance is quite poor and way worse than the RNN baseline. Further investigation is needed, but training seems to fail somehow as the trained weights are all very close to zero. As a result, the input plays little role in the prediction and surely can't do much better than a dummy estimator. For reference, basic results are shown in table \ref{tab:compare_cnx_net_baseline_medians}.

\begin{table}[h]
	\centering
	\begin{tabular}{|l|c|c|}
		\hline
		Dynamic & RNN & Generalized moving average \\
		\hline
		All & 0.27 & 0.61 \\
		\OU & 0.26 & 0.61 \\
		Noisy Line & 0.12 & 0.62 \\
		Markovian Switch & 0.47 & 0.61 \\
		\hline
	\end{tabular}
	\caption{Median loss for RNN baseline and convex net estimator for various dynamics on validation set}
	\label{tab:compare_cnx_net_baseline_medians}
\end{table}

\subsection{Comparison with CNN}
One dimensional CNN is sometimes seen as a good tool to analyse time series. We use a standard CNN structure stacking convolutional layer followed by a pooling layer. To keep nets architecture similar in term of parameters, we use two layers of convolution + pooling.
After optimization, we get hyper-parameters shown in table \ref{tab:cnn_baseline_parameters}. Interestingly, both channel and kernel have taken the maximum value in the range we tested\footnote{from 3 to 20}.

\begin{table}[h]
	\centering
	\begin{tabular}{|c|c|}
		\hline
		Parameter & Value \\
		\hline
		Learning rate & 0.004\\
		Channel dimension & 20\\
		Kernel size & 20\\
		\hline
	\end{tabular}
	\caption{Parameters of CNN baseline}
	\label{tab:cnn_baseline_parameters}
\end{table}

Yet, we are unable to find the supposed general efficiency of CNNs in our setup as seen on figure \ref{fig:compare_cnn_net_baseline}.
Actually, CNN performance is barely better than a dummy classifier as seen on table \ref{tab:compare_cnn_net_baseline}.

\begin{figure}[h!]
	\centering\includegraphics[scale=0.5]{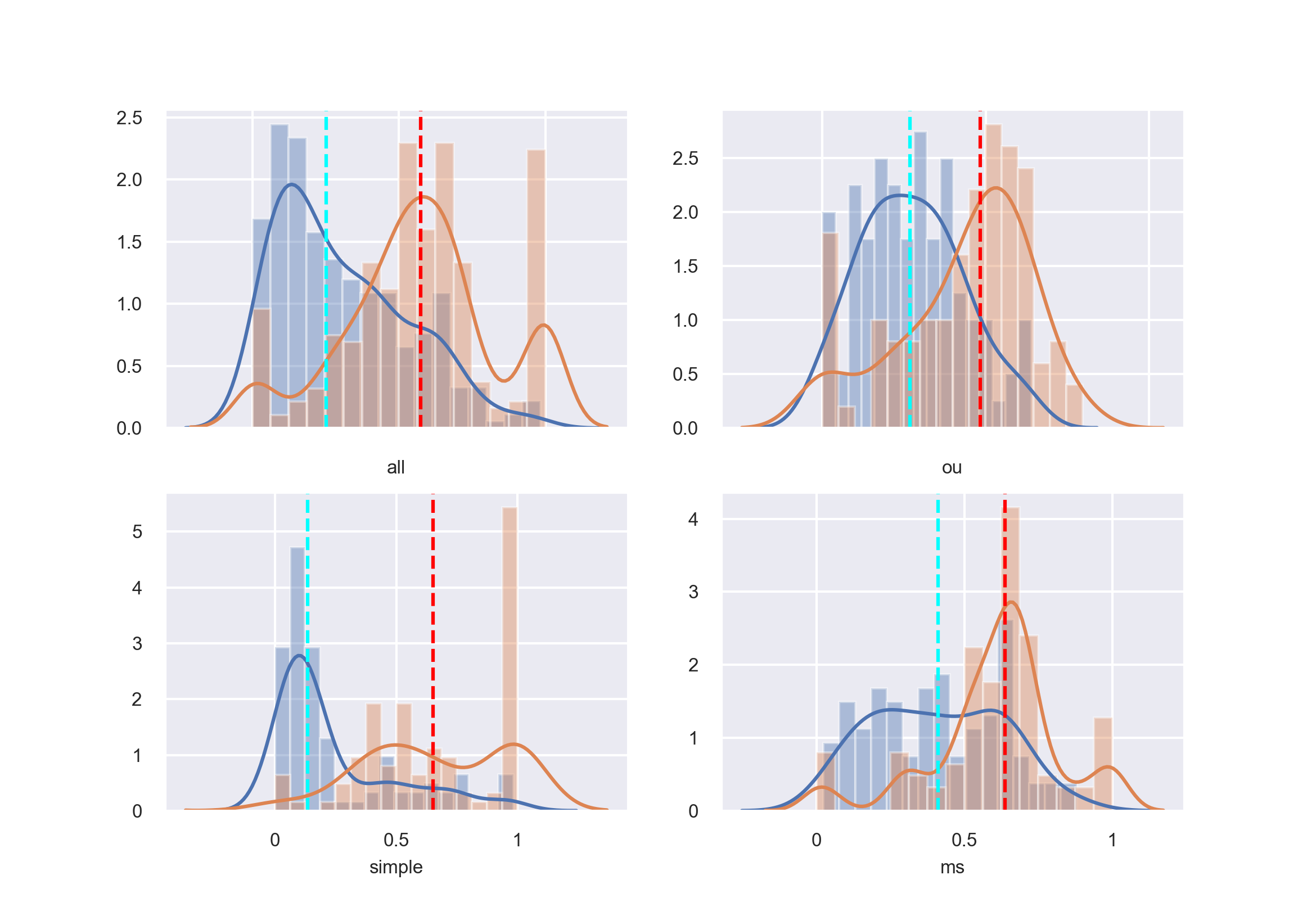}
	\caption{Comparing validation loss distribution for CNN estimator in orange with red median and RNN baseline in blue with cyan median}
	\label{fig:compare_cnn_net_baseline}
\end{figure}

\begin{table}[h]
	\centering
	\begin{tabular}{|l|c|c|}
		\hline
		Dynamic & RNN & CNN \\
		\hline
		All & 0.25 & 0.58 \\
		\OU & 0.27 & 0.48 \\
		Noisy Line & 0.13 & 0.65 \\
		Markovian Switch & 0.41 & 0.64 \\
		\hline
	\end{tabular}
	\caption{Median loss for RNN baseline and CNN estimator for various dynamics on validation set}
	\label{tab:compare_cnn_net_baseline}
\end{table}

\pagebreak
\section{Model based estimators}\label{section:mle}

In this section, we compare the performance of the RNN baseline with classifiers based on maximum likelihood estimation (MLE) of the process parameters. These estimators therefore incorporate knowledge about the underlying data generative process. For each dynamic (see subsection \ref{subsection:tsdynamics}), we compute the MLE estimator of the trend parameter. Then, we use this value at each time step to compute a trend label $\in \{ -1, 0, 1\}$. This approach, which converts a numerical estimate of the trend to a label, is described in the following subsection.\\

In subsections \ref{subsection:mlestimatorsnl}, \ref{subsection:mlestimatorsou} and \ref{subsection:mlestimatorsms} we recall the formulas of the MLE trend estimators and present their empirical performance in comparison with the RNN baseline. Overall, the baseline shows good performance against these estimators. Theoretical details of MLE derivations are included in annex \ref{annex:theorydetails}.

\subsection{From MLE to trend classifier}\label{subsection:mlestimators}

As a reminder, the training data used for the learning step of the neural networks is comprised of piecewise trajectories of the dynamics and uses randomized model parameters. Taking into account this additional randomness in a MLE estimation framework would make the theory intractable. In order to compare MLE based trend classification with neural networks, we use a sliding window mechanism. For a sliding window $W_i$ of length $\eta$:

\begin{itemize}
	\item[-] we compute the value of the trend estimator $\hat{\mu}_i$
	\item[-] we map the value of $\hat{\mu}_i$ to a label using the sign function \footnote{$$
\sgn_\epsilon(x) = 
\begin{cases} 
-1 & x\leq -\epsilon \\
1 & x\geq \epsilon \\
0 & \text{otherwise}
\end{cases}
$$} (for a given threshold $\epsilon$) and predict this label with probability $1$.
\end{itemize}
We only need this mechanism for the Noisy Line Process and the Piecewise \OU\space Process.

\subsection{Noisy Line Estimator}\label{subsection:mlestimatorsnl}
\subsubsection{Derivation of MLE estimator on an interval}

Deriving the maximum likelihood estimator for the slope $\mu$ is easy as any finite sample $(Y_{t_1}, \dots, Y_{t_n})$ on a subdivision $t_1 < \ldots < t_n$ is a Gaussian vector with diagonal covariance matrix. Maximizing the MLE of $\mu$ yields to the slope formula (see annex \ref{annex:noisyline} for mathematical details):

\begin{equation}
\label{noisy-line-mu-est}
\hat{\mu}(y_{t_1}, \ldots, y_{t_n}) = \dfrac{\sum_{i=1}^n (t_i - t_0) (y_{t_i} - y_{t_0})}{\sum_{i=1}^n (t_i - t_0)^2}\text{ .}
\end{equation}

The MLE estimator for the slope follows a normal distribution with mean $\mu$ and variance $\sigma^2 (\sum_{i=1}^n (t_i - t_0)^2)^{-1}$. For a subdivision with constant time step $\delta := t_i - t_{i-1}$ the variance is given by:

$$ \mathrm{V}(\hat{\mu}) =\frac{6 \sigma^2}{n(n+1)(2n+1)}$$

hence decreasing with the number of observations at the rate $n^{-3}$.

\subsubsection{Empirical performance}
Using the same procedure as in section \ref{section:nonmodel}, we compare its performance against our RNN baseline on figure \ref{fig:compare_nl_mle_net_baseline} and table \ref{tab:compare_nl_mle_net_baseline}.
\begin{figure}[h]
               \centering
               \includegraphics[scale=0.5]{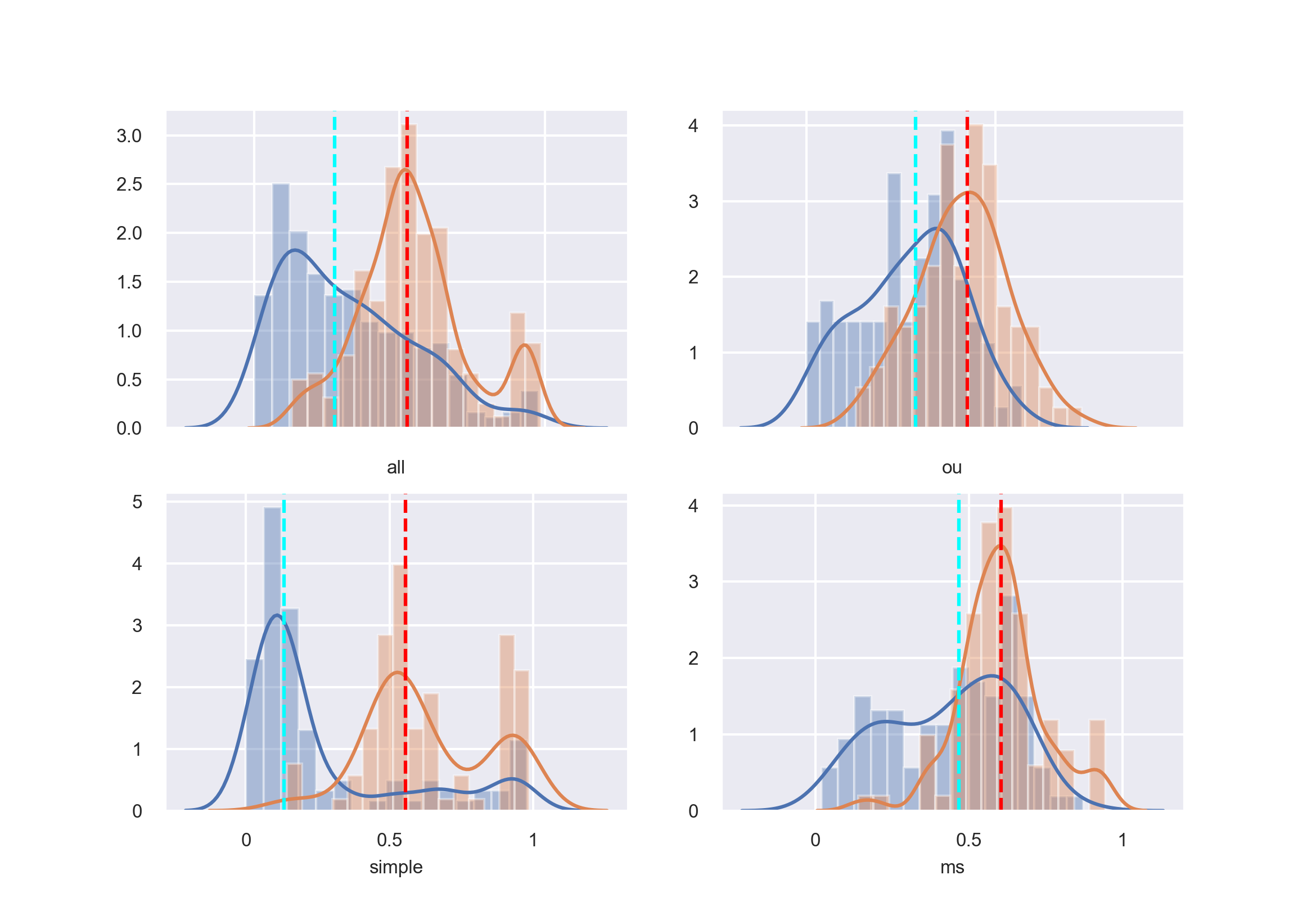}
               \caption{Comparing validation loss distribution for Noisy Line Estimator in orange with red median and RNN baseline in blue with cyan median}
                \label{fig:compare_nl_mle_net_baseline}
\end{figure}

\begin{table}[h]
	\centering
	\begin{tabular}{|l|c|c|}
		\hline
		Dynamic & RNN & NLE \\
		\hline
		All & 0.28 & 0.53 \\
		\OU & 0.29 & 0.42 \\
		Noisy Line & 0.14 & 0.56 \\
		Markovian Switch & 0.47 & 0.61 \\
		\hline
	\end{tabular}
	\caption{Median loss for RNN or Noisy Line Estimator for various dynamics on validation set}
	\label{tab:compare_nl_mle_net_baseline}
\end{table}

The Noisy Line Estimator is easily overtaken by the RNN baseline even on the simple noisy line dynamic\footnote{which is a bit counter-intuitive. The fact that our process is piecewise contrary to the MLE derivation is probably responsible for this underperformance.}.


\subsection{Piecewise OU process}\label{subsection:mlestimatorsou}
\subsubsection{Derivation of MLE estimator on an interval}

Estimating the parameters of time continuous diffusions is a difficult task. One way to construct such estimators is to derive the likelihood function on a discrete grid of prices observations. Due to non-independent samples, likelihood can be hard to derive and its maximisation might require the use of numerical optimization procedures. In the present study we leverage on the theoretical results of \cite{Liptser2013a, Liptser2013b} that express the likelihood function in a simple stochastic integral form. In the case of the \OU\space process with linear trend diffusion:

$$dY_t  = \mu dt - a Y_t dt + \sigma dW_t\text{ ,}$$

the formulas for the estimators are given by:

\begin{equation}
\label{diffusion-line-mu-est}
\hat{\mu} = \dfrac{\frac{1}{2}(Y_T^2 - T) \int_0^T Y_tdt - (Y_T - Y_0) \int_0^T Y_t^2dt}{(\int_0^T Y_t dt)^2 - T \int_0^T Y_t^2dt}
\end{equation}

\begin{equation}
\label{diffusion-line-a-est}
\hat{a} = \dfrac{\frac{1}{2}T (Y_T^2 - Y_0^2 - T)  - (Y_T - Y_0) \int_0^T Y_tdt}{(\int_0^T Y_t dt)^2 - T \int_0^T Y_t^2dt}\text{ .}
\end{equation}

To some extent, an analogy can be drawn with classical OLS estimators $\hat{\beta} = (X^T X)^{-1}X^T y$ where the variance scaling term $(X^TX)^{-1}$ corresponds to the term $\left((\int_0^T Y_t dt)^2 - T \int_0^T Y_t^2dt \right)^{-1}$. The reader can refer to the technical addendum \ref{annex:ou} for mathematical details. When dealing with discrete time observations, the integrals are approximated using the sample values and discrete time increments. Simulations show that these estimators exhibit good empirical properties, although they are biased. It can be shown that the biases for both estimators are given by:

$$ b(\hat{\mu}) = \mathbb{E}_{(\mu, a)} \left[ \frac{(\int_0^T Y_t dW_t) (\int_0^T Y_tdt) - W_T \int_0^T Y_t^2dt}{(\int_0^T Y_t dt)^2 - T \int_0^T Y_t^2 dt }\right] $$ 

$$ b(\hat{a}) = \mathbb{E}_{(\mu, a)} \left[ \frac{T (\int_0^T Y_t dW_t) - W_T \int_0^T Y_t dt}{(\int_0^T Y_t dt)^2 - T \int_0^T Y_t^2 dt }\right] \text{ .}$$ 

In practical applications, the expectations above are computed by first evaluating the residuals $dW_t = dY_t - (\hat{\mu} - \hat{a}Y_t) dt$ over the observed values of $(y_{t_1}, \ldots, y_{t_n})$ and then approximating the integrals by summation of the weighted increments.

\subsubsection{Empirical performance}
We design a trend estimator using the sliding window mechanism of subsection \ref{subsection:mlestimators}. We compare its performance against our RNN baseline on figure \ref{fig:compare_ou_mle_net_baseline} and table \ref{tab:compare_ou_mle_net_baseline}. Interestingly, the performance on the \OU\space dynamic is markedly better and comparable to the performance of the RNN on the \OU\space dynamic.

\begin{figure}[h]
                \centering
               \includegraphics[scale=0.5]{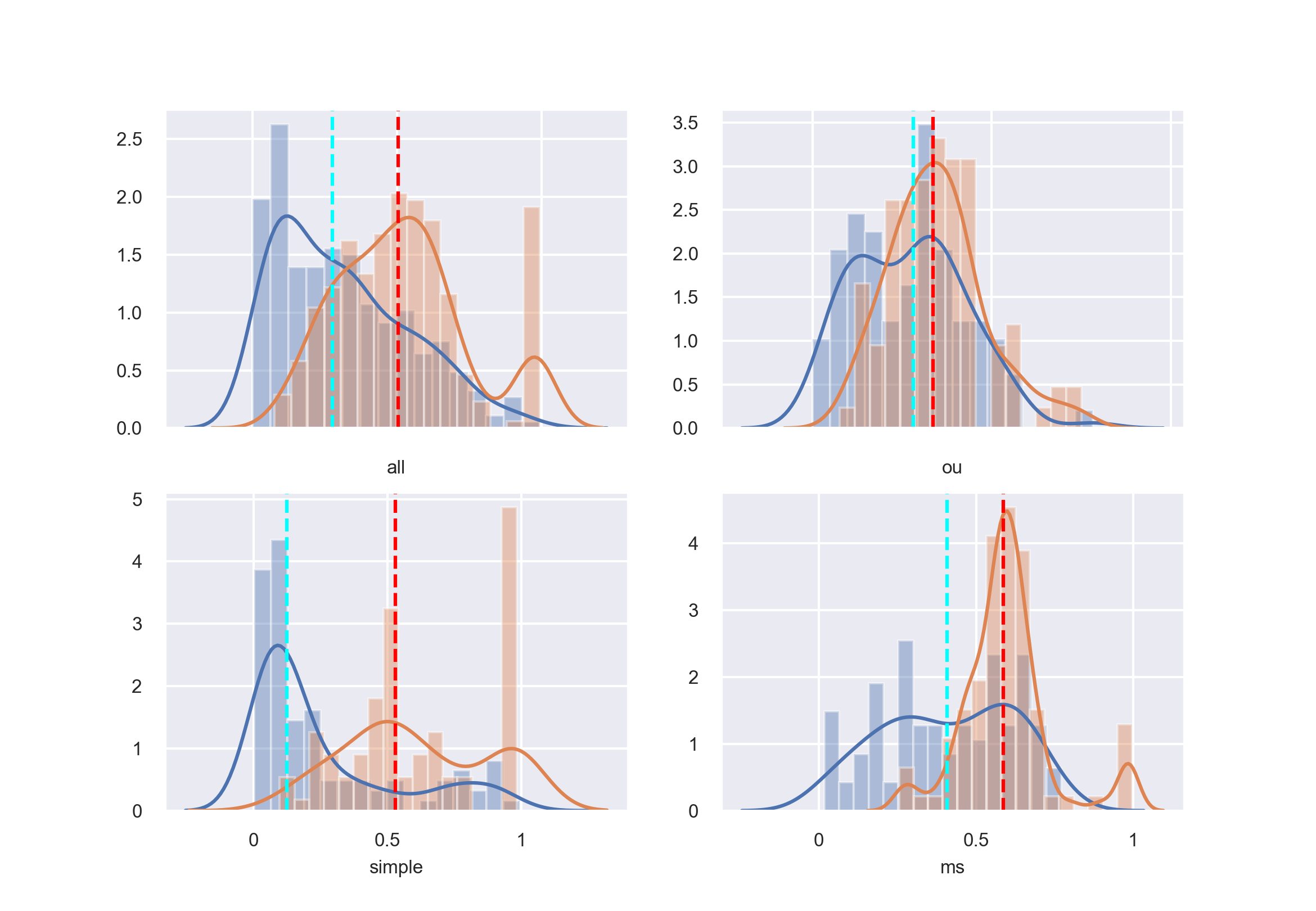}
                \caption{Comparing validation loss distribution for \OU\space Estimator in orange with red median and RNN baseline in blue with cyan median}
                \label{fig:compare_ou_mle_net_baseline}
\end{figure}

\begin{table}[h]
	\centering
	\begin{tabular}{|l|c|c|}
		\hline
		Dynamic & RNN & OUE \\
		\hline
		All & 0.28 & 0.50 \\
		\OU & 0.28 & 0.34 \\
		Noisy Line & 0.12 & 0.53 \\
		Markovian Switch & 0.41 & 0.58 \\
		\hline
	\end{tabular}
	\caption{Median loss for RNN or \OU\space Estimator (OUE) for various dynamics on validation set}
	\label{tab:compare_ou_mle_net_baseline}
\end{table}
\pagebreak

\subsection{Markovian switch process}\label{subsection:mlestimatorsms}

\subsubsection{Derivation of MLE estimator}
The Markovian Switch dynamic described in section \ref{subsubsection:switchingmarkovian} is actually the dynamic of a Hidden Markov Model (HMM) with Gaussian emissions probabilities on log returns:
$$\log\left(\frac{y_{t+1}}{y_t}\right) \sim \mathcal{N}(\gamma \mu_t, \sigma)$$
where $\{\mu_t\}_{t\geq 0}$ is a simple discrete three-state Markov chain. We then use classic techniques (see \cite{Rabiner1989} for example) to get an estimate of the hidden states which have generated $\log\left(\frac{y_{t+1}}{y_t}\right)$.

\subsubsection{Empirical performance}
We train a three-state HMM with Gaussian emission probabilities on the four time series dynamics (as described in subsection \ref{subsection:tsdynamics}). Performance is similar regardless of the training dynamic.
It is not obvious that the hidden states of the HMM will fit in our up, down, flat trend categories. To be able to compute a loss for the HMM, we first map the three-state of the HMM using the mean of the distribution given the hidden state. We sort them in increasing order and map them to down, flat, up states. We would expect to get a sequence of means being negative, close to zero and positive. Actually, only estimators trained on the mixed or Markovian Switch dynamics exhibit means which are clearly separated into a negative, near zero and positive value.
Performance being similar, we use as baseline the estimator trained on the Markovian Switch dynamic which seems the most natural.
Globally, the HMM has a hard time predicting the trend of any dynamic. This might be a bit surprising especially with the Markovian Switch dynamic. We note however that the best validation score is given when the HMM is trained on the Markovian Switch dynamic.
As seen on figure \ref{fig:compare_hmm_net_baseline} or table \ref{tab:compare_hmm_net_baseline} HMM does not provide a good estimator of trend and is easily overtaken by the RNN approach.

\begin{figure}[h!]
	\centering\includegraphics[scale=0.5]{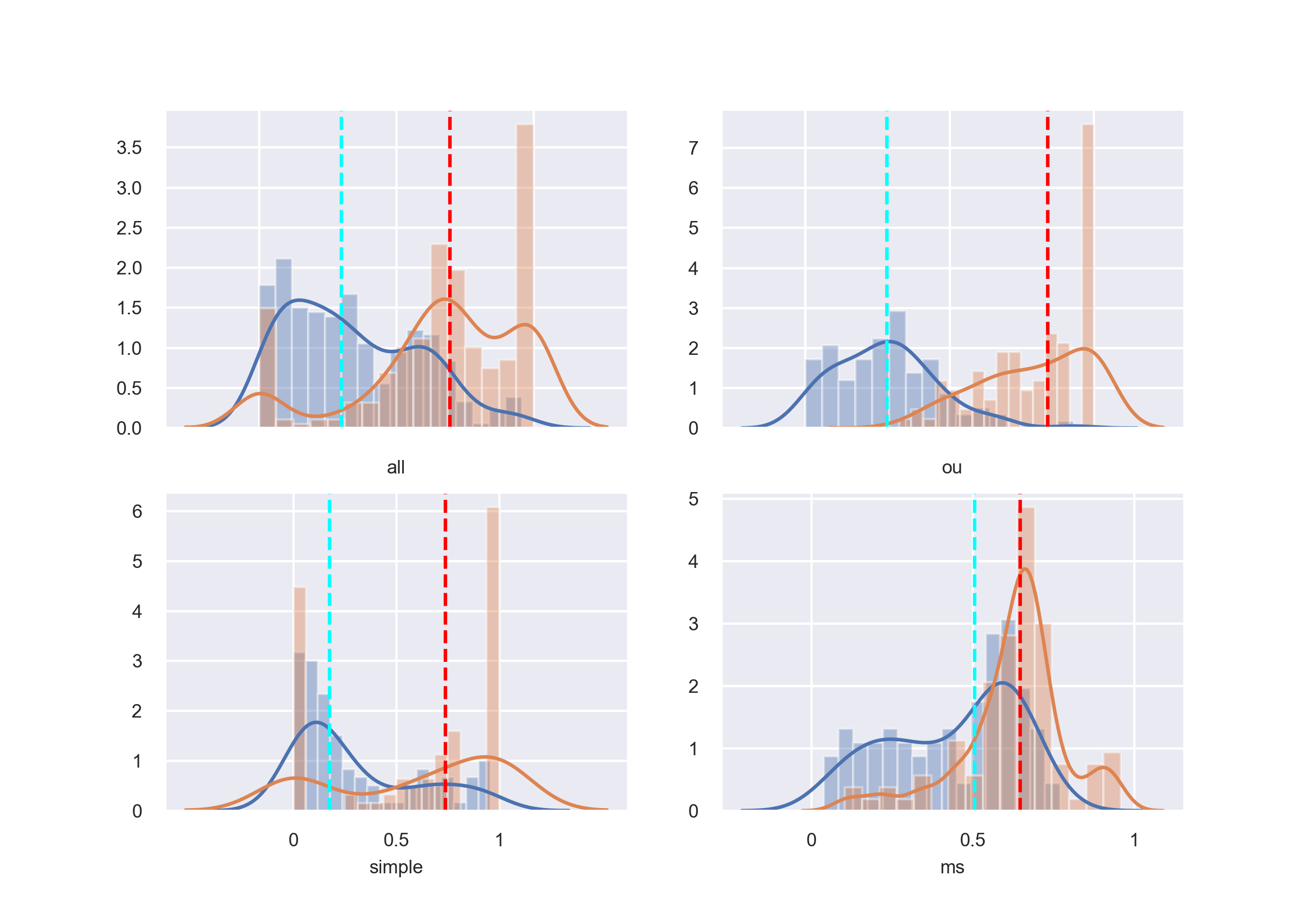}
	\caption{Comparing validation loss distribution for HMM estimator in orange with red median and RNN baseline in blue with cyan median}
	\label{fig:compare_hmm_net_baseline}
\end{figure}

\begin{table}[h]
	\centering
	\begin{tabular}{|l|c|c|}
		\hline
		Dynamic & RNN & HMM \\
		\hline
		All & 0.30 & 0.70 \\
		\OU & 0.28 & 0.84 \\
		Noisy Line & 0.17 & 0.74 \\
		Markovian Switch & 0.50 & 0.64 \\
		\hline
	\end{tabular}
	\caption{Median loss for RNN or HMM estimators for various dynamics on validation set}
	\label{tab:compare_hmm_net_baseline}
\end{table}

\pagebreak
\section{Summary}
In this paper, we have investigated the use of several trend estimators on time series behaving similarly to the ones encountered in finance. We have derived theoretical maximum likelihood estimators of trends for two standard dynamics and implemented them. We have shown that certain RNNs are in a way a generalization of simple moving average techniques. For a simple dynamic, we have shown that this generalization transforms the trend estimation problem into locating the state vector. Finally, we have showed empirically that GRU or LSTM cells are on average the best building blocks to use compared to a broad range of estimators in order to detect trends in time series. Putting the emphasis on learning stylized data and then transferring to real data rather than building complex structures fitted to data is also an important takeaway of this paper. Ongoing preliminary research seems to validate our approach for financial applications. This might pave the way to building efficient market estimators protected against over-fitting.
\clearpage

\begin{appendices}
\appendix

\section{MLE estimators theory}\label{annex:theorydetails}

\subsection{Simple noisy line estimator}\label{annex:noisyline}

On a discrete time grid $t_0 < t_1 < \ldots < t_n$ we consider the ``noisy line'' dynamics:

\begin{equation}
\label{noisy-line-dynamics}
Y_{t_i} = y_{t_0} + \mu (t_i - t_0) + \varepsilon_{t_i}
\end{equation}

where $(\varepsilon_{t_i})_{0 \leq i \leq n}$ is a collection of i.i.d. normal random variables $\mathcal{N}(0, \sigma^2)$.


One can easily show that $(Y_{t_1}, \ldots, Y_{t_n})$ is a Gaussian vector with diagonal covariance matrix. The likelihood function is expressed as

$$ \mathcal{L}(\mu, \sigma^2 \vert y_{t_1}, \ldots y_{t_N} ) = (\sigma \sqrt{2 \pi})^{-n} \times \mathrm{exp} \left( -\frac{1}{2 \sigma^2} \sum_{i=1}^{N} (y_{t_i} - y_{t_0} - \mu  (t_i - t_0))^2  \right)\text{ .}$$
Let $l = \mathrm{log}\mathcal{L}$ denote the log-likelihood. Solving $\dfrac{\partial l}{\partial \mu} = 0$ yields to the expression \eqref{noisy-line-mu-est}.


By expressing $\hat{\mu}$ as

\begin{align*}
\hat{\mu}(Y_{t_1}, \ldots, Y_{t_n}) &= \frac{\sum_{i=1}^n (t_i - t_0) (\mu (t_i - t_0) + \varepsilon_{t_i})}{\sum_{i=1}^n (t_i - t_0)^2}\\
&= \mu + \frac{\sum_{i=1}^n (t_i - t_0) \varepsilon_{t_i}}{\sum_{i=1}^n (t_i - t_0)^2}
\end{align*}

one can show that $\mathbb{E}(\hat{\mu}) = \mu$ and $\mathrm{V}(\hat{\mu}) = \sigma^2 (\sum_{i=1}^n (t_i - t_0)^2)^{-1}$. \\

Simulations of trajectories \eqref{noisy-line-dynamics} to compute samples estimates of $\mu$ are in agreement with the above result.


\subsection{Linear trend with diffusion estimator}\label{annex:ou}

We consider the diffusion with the dynamics

$$dY_t = \mu dt -aY_t dt +  dW_t$$

where $W$ is a Wiener process and $\mu, a$ are unknown scalar quantities to be estimated from observations. In an infinitesimal time period $dt$, the price moves linearly by an amount $\mu dt$ and fluctuates around this trend term by an amount equal to $-a Y_t dt +  dW_t$.\\

We seek to construct estimation techniques for $\mu$ and $a$. In the setting of discrete observations $(y_{t_0}, \ldots, y_{t_i}, \ldots, y_{t_N})$ various estimation approaches can be used. For instance, one can first de-trend the observed price series and then estimate the fluctuation speed $a$ using standard OLS techniques. The drawbacks of such an approach are twofold. Firstly, estimation is conducted regardless of the joint distribution of $(\hat{\mu}, \hat{a})$. Secondly, classical OLS assumptions are most likely to fail in the case of a diffusion price process. As a consequence of non-stationarity of residuals, it can be shown that the OLS estimator of $a$ is biased. Such behaviours are studied in depth in \cite{Yu2012}.\\

Our approach follows the results from \cite{Liptser2013b} in which the authors estimate drift parameters in a continuous likelihood maximization framework. Let us recall the main results from \cite{Liptser2013a, Liptser2013b}.


\begin{theorem}\label{th:equivmeasure}
Let $Y = (Y_t)_{0 \leq t \leq T}$ be a process satisfying the stochastic differential equation (SDE)
	
$$ dY_t = a(t, Y_t)dt + dW_t\, ,\,Y_0 = 0,\,0 \leq t \leq T$$
	
where $a: t\mapsto a(t, .)$ is a non-anticipative function.
	
Under the assumption that  $\mathbb{P}$- almost surely,
	
$$ \int_0^T a(t, Y_t)^2 dt < \infty, \int_0^T a(t, W_t)^2 dt < \infty$$
	
then the measures $\mu_Y$ and $\mu_W$ are equivalent. Moreover, $\mathbb{P}$-almost surely, the Radon-Nikodym derivative of $\mu_Y$ with respect to $\mu_W$ is given by:
\begin{equation}
\label{radon-nykodim-deriv}
\frac{d\mu_Y}{d\mu_W}(t, Y_t) = \mathrm{exp}\left( \int_0^t a(s, Y_s) dY_s -\dfrac{1}{2} \int_0^t a(s, Y_s)^2 ds \right).
\end{equation}
\end{theorem}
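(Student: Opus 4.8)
The statement is a form of Girsanov's theorem for the laws $\mu_Y$ and $\mu_W$ that $Y$ and a Brownian motion induce on path space. The plan is to construct the density $Z_t=\left.\frac{d\mu_Y}{d\mu_W}\right|_{\mathcal{F}_t}$ directly on the probability space carrying $Y$ and $W$, and then to recover \eqref{radon-nykodim-deriv} by substituting the SDE into the exponent. Since $W$ is a $\mathbb{P}$-Brownian motion and, by hypothesis, $\int_0^T a(t,Y_t)^2\,dt<\infty$ $\mathbb{P}$-a.s., the stochastic integral $\int_0^t a(s,Y_s)\,dW_s$ is well defined and
$$Z_t=\exp\!\left(-\int_0^t a(s,Y_s)\,dW_s-\tfrac12\int_0^t a(s,Y_s)^2\,ds\right)$$
is a nonnegative local martingale, hence a supermartingale with $\mathbb{E}[Z_T]\le 1$. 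Moreover $W_t=Y_t-\int_0^t a(s,Y_s)\,ds$ is adapted to the natural filtration of $Y$ (here the non-anticipativity of $a$ is used), so $Z_t$ is a measurable functional of $(Y_s)_{s\le t}$, say $Z_t=g_t(Y)$.

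The crux is to promote $Z$ to a true martingale, i.e.\ to show $\mathbb{E}[Z_T]=1$; this is exactly where \emph{both} finiteness hypotheses enter, and where a naive global Novikov or Kazamaki bound is unavailable. I would argue by localisation: truncate the coefficient, $a^{(n)}=a\,\mathbf{1}_{\{|a|\le n\}}$ (or stop at $\tau_n=\inf\{t:\int_0^t a(s,Y_s)^2\,ds\ge n\}\wedge T$); for a bounded coefficient Novikov's condition holds, so the associated $Z^{(n)}$ is a genuine martingale and $d\tilde{\mathbb{P}}^{(n)}=Z^{(n)}_T\,d\mathbb{P}$ is a probability measure under which, by Girsanov, $Y$ is a Brownian motion on $[0,T]$ (the drift is cancelled). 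Letting $n\to\infty$, the hypothesis $\int_0^T a(t,W_t)^2\,dt<\infty$ controls the densities along Brownian paths and $\int_0^T a(t,Y_t)^2\,dt<\infty$ controls them along $Y$-paths; together these should give $\mathbb{P}$-a.s.\ convergence $Z^{(n)}_T\to Z_T$ with uniform integrability, hence $\mathbb{E}[Z_T]=1$. I expect this passage to the limit — essentially the content of the result quoted from \cite{Liptser2013a,Liptser2013b} — to be the main obstacle.

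Granting $\mathbb{E}[Z_T]=1$, set $d\tilde{\mathbb{P}}=Z_T\,d\mathbb{P}$; by Girsanov $Y$ is a Brownian motion under $\tilde{\mathbb{P}}$, so for every bounded measurable functional $f$ on path space $\mathbb{E}_{\mu_W}[f]=\mathbb{E}_{\tilde{\mathbb{P}}}[f(Y)]=\mathbb{E}_{\mathbb{P}}[Z_T\,f(Y)]=\mathbb{E}_{\mathbb{P}}[g_T(Y)\,f(Y)]=\mathbb{E}_{\mu_Y}[g_T\,f]$. Hence $\mu_W\ll\mu_Y$ with $\frac{d\mu_W}{d\mu_Y}=g_T$. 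Because $\int_0^T a(s,Y_s)^2\,ds<\infty$ the exponent in $Z_T$ is finite, so $Z_T>0$ $\mathbb{P}$-a.s.; this yields the reverse absolute continuity $\mu_Y\ll\mu_W$ with density $1/g_T$, so $\mu_Y\sim\mu_W$. Finally, inserting $dW_t=dY_t-a(t,Y_t)\,dt$ gives $\int_0^t a(s,Y_s)\,dW_s=\int_0^t a(s,Y_s)\,dY_s-\int_0^t a(s,Y_s)^2\,ds$, whence
$$\frac{1}{g_t(Y)}=\exp\!\left(\int_0^t a(s,Y_s)\,dY_s-\tfrac12\int_0^t a(s,Y_s)^2\,ds\right),$$
which is \eqref{radon-nykodim-deriv}. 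The remaining points — measurability of $g_t$, weak existence for the truncated SDEs, and the interchange of limits — are routine.
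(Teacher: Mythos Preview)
The paper does not actually prove this theorem: immediately after the statement it writes ``The reader can refer to \cite{Liptser2013a}, Theorem 7.7, for a formal statement and proof'' and moves on to use the Radon--Nikodym formula as a black box. So there is no in-paper argument to compare against; your outline is already far more detailed than what the paper supplies.

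That said, your sketch is essentially the Liptser--Shiryaev route the paper is citing: build the exponential local martingale, localise to upgrade it to a true martingale, apply Girsanov, and read off the density. One small imprecision is worth flagging. With the multiplicative truncation $a^{(n)}=a\,\mathbf{1}_{\{|a|\le n\}}$ and $d\tilde{\mathbb P}^{(n)}=Z^{(n)}_T\,d\mathbb P$, Girsanov makes $\tilde W_t=W_t+\int_0^t a^{(n)}(s,Y_s)\,ds$ a Brownian motion, so $dY_t=(a-a^{(n)})(t,Y_t)\,dt+d\tilde W_t$; the drift is not fully cancelled and $Y$ is \emph{not} a Brownian motion under $\tilde{\mathbb P}^{(n)}$ on all of $[0,T]$. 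The stopping-time localisation $\tau_n=\inf\{t:\int_0^t a(s,Y_s)^2\,ds\ge n\}$ you mention in parentheses is the one that actually does the job, and the symmetric hypothesis $\int_0^T a(t,W_t)^2\,dt<\infty$ is used to run the same construction from the Brownian side and obtain equivalence rather than mere absolute continuity. You correctly identify the limit passage (uniform integrability of the localised densities) as the nontrivial step; that is precisely what Theorem 7.7 in \cite{Liptser2013a} handles, and the paper is content to import it.
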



The reader can refer to \cite{Liptser2013a}, Theorem 7.7, for a formal statement and proof. The issue of the drift parametric estimation is addressed in \cite{Liptser2013b} by considering the diffusion process:

\begin{equation}
\label{diffusiononedrift}
dY_t = \theta \alpha(t, Y_t) dt + dW_t\text{ .}
\end{equation}

Using the result above with $a(t,x) = \theta \alpha(t, x)$ and under similar assumption on $\alpha$ one can show that the measures $\mu_Y^\theta$ and $\mu_W$ are equivalent and that the likelihood function $\mathcal{L}_\theta(Y)$ can be expressed as

$$ \mathcal{L}_\theta (Y) =\mathrm{exp} \left( \theta \int_0^t \alpha(s, Y_s)dY_s - \dfrac{\theta^2}{2}\int_0^t \alpha(s, Y_s)^2 ds \right)\text{ .}$$

It is easy to show that the log-likelihood is a concave function of the parameter $\theta$ and that its maximum is attained for $\theta^*$ such that $\dfrac{\mathcal{L}_\theta}{d \theta}(\theta^*) = 0$.

%
%
%
%
%

As a consequence, under the assumption that
\begin{equation}
\label{one-drift-conditions}
\int_0^T \alpha(t, Y_t)^2 dt < \infty, \int_0^T \alpha(t, W_t)^2 dt < \infty
\end{equation}

and under the condition that $\mathbb{P}_\theta$-a.s. $\int_0^T \alpha(t, Y_t) dt > 0$ the maximum likelihood estimation of $\hat{\theta}(Y)$ is expressed as:

\begin{equation}
\label{mle-est-1d-formula}
\hat{\theta}(Y) = \dfrac{\int_0^T \alpha(t, Y_t) dY_t}{\int_0^T \alpha(t, Y_t)^2 dt}.
\end{equation}

When dealing with real data, the numerical value of $\hat{\theta}$ is computed using numerical integration techniques along the observed path $(y_{t_0}, \ldots, y_{t_N})$. From now on, we adopt the lighter notations:

$$I_Y(\alpha) := \int_0^T \alpha(t, Y_t) dY_t $$

$$ I_t (\alpha) := \int_0^T \alpha(t, Y_t) dt$$

so that the MLE estimator \eqref{mle-est-1d-formula} is expressed as $\dfrac{I_Y(\alpha)}{I_t(\alpha^2)}$.


For most drift functions $\alpha$ the estimator $\hat{\theta}$ has non-zero bias. An approximation of the bias can be easily derived by substituting the expression of $d Y_t$  in \eqref{mle-est-1d-formula}:

\begin{align*}
\hat{\theta}(Y)  &=  \dfrac{\int_0^T \alpha(t, Y_t) dY_t}{\int_0^T \alpha(t, Y_t)^2 dt}\\
&= \dfrac{\int_0^T \alpha(t, Y_t)(\theta \alpha(t, Y_t) dt + dW_t)}{\int_0^T \alpha(t, Y_t)^2 dt}\\
&= \theta + \dfrac{\int_0^T \alpha (t, Y_t) dW_t}{\int_0^T \alpha(t, Y_t)^2 dt}
\end{align*}

Hence the bias $b(\hat{\theta}(Y) )= \mathbb{E}_\theta (\hat{\theta} - \theta)$ can be computed by approximating the expectation:

$$ \mathbb{E}_\theta \left(\dfrac{\int_0^T \alpha (t, Y_t) dW_t}{\int_0^T \alpha(t, Y_t)^2 dt} \right)\text{ .}$$

In the following, we extend \eqref{diffusiononedrift} to the 2D parametric drift case:

\begin{equation}
\label{diffusiontwodrifts}
d Y_t = (\theta_1 \alpha_1(t, Y_t) + \theta_2 \alpha_2(t, Y_t))dt + dW_t.
\end{equation}

\begin{theorem}\label{mle-est-2d}

Let $(Y_t)_{t \geq 0}$ be a process satisfying the diffusion equation \eqref{diffusiontwodrifts} where both $\alpha_1$ and $\alpha_2$ satisfy the condition \eqref{one-drift-conditions}.

Under the condition that $\mathbb{P}_\theta$-a.s. $\int_0^T \alpha_i(t, Y_t) dt > 0, i=1, 2$ the maximum likelihood estimation of $\hat{\theta}(Y)$ is expressed as:

\begin{equation}
\label{mle-est-2d-formula-1}
\hat{\theta}_1(Y) = \dfrac{I_Y(\alpha_2) I_t(\alpha_1 \alpha_2) - I_Y(\alpha_1) I_t(\alpha_2^2)}{I_t(\alpha_1 \alpha_2)^2 - I_t(\alpha_1^2) I_t(\alpha_2 ^2)},
\end{equation}

\begin{equation}
\label{mle-est-2d-formula-2}
\hat{\theta}_2(Y) = \dfrac{I_Y(\alpha_1) I_t(\alpha_1 \alpha_2) - I_Y(\alpha_2) I_t(\alpha_1^2)}{I_t(\alpha_1 \alpha_2)^2 - I_t(\alpha_1^2) I_t(\alpha_2 ^2)}.
\end{equation}
\end{theorem}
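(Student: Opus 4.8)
The plan is to replay the one-dimensional argument leading to \eqref{mle-est-1d-formula}, but with the affine drift $a(t,x) = \theta_1\alpha_1(t,x) + \theta_2\alpha_2(t,x)$. First I would apply Theorem \ref{th:equivmeasure} with this choice of $a$: its hypotheses $\int_0^T a(t,Y_t)^2\,dt < \infty$ and $\int_0^T a(t,W_t)^2\,dt < \infty$ (a.s.) follow by expanding the square and using $2|uv| \le u^2 + v^2$ together with the assumption \eqref{one-drift-conditions} imposed on each $\alpha_i$; the same inequality (or Cauchy--Schwarz) also shows the cross integral $I_t(\alpha_1\alpha_2)$ is finite. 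Substituting into the Radon--Nikodym formula \eqref{radon-nykodim-deriv} then gives the likelihood
\begin{equation*}
\mathcal{L}_\theta(Y) = \exp\!\left( \theta_1 I_Y(\alpha_1) + \theta_2 I_Y(\alpha_2) - \tfrac{1}{2}\big( \theta_1^2 I_t(\alpha_1^2) + 2\theta_1\theta_2 I_t(\alpha_1\alpha_2) + \theta_2^2 I_t(\alpha_2^2) \big) \right).
\end{equation*}

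Next I would pass to the log-likelihood $\ell(\theta_1,\theta_2)$, which is a quadratic polynomial in $(\theta_1,\theta_2)$ with constant Hessian
\begin{equation*}
H = -\begin{pmatrix} I_t(\alpha_1^2) & I_t(\alpha_1\alpha_2) \\ I_t(\alpha_1\alpha_2) & I_t(\alpha_2^2) \end{pmatrix}.
\end{equation*}
By Cauchy--Schwarz, $I_t(\alpha_1\alpha_2)^2 \le I_t(\alpha_1^2)\,I_t(\alpha_2^2)$, so $H$ is negative semidefinite and $\ell$ is concave; when $\alpha_1$ and $\alpha_2$ are not $dt$-a.e. proportional along the path the inequality is strict, $H$ is negative definite, and $\ell$ has a unique maximizer given by the vanishing of its gradient.

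I would then write the first-order conditions $\partial_{\theta_i}\ell = 0$, $i=1,2$, which form the linear system
\begin{equation*}
\begin{pmatrix} I_t(\alpha_1^2) & I_t(\alpha_1\alpha_2) \\ I_t(\alpha_1\alpha_2) & I_t(\alpha_2^2) \end{pmatrix}\begin{pmatrix}\hat\theta_1 \\ \hat\theta_2\end{pmatrix} = \begin{pmatrix} I_Y(\alpha_1) \\ I_Y(\alpha_2)\end{pmatrix},
\end{equation*}
and solve it by Cramer's rule. The determinant is $I_t(\alpha_1^2) I_t(\alpha_2^2) - I_t(\alpha_1\alpha_2)^2$, nonzero by the non-degeneracy above; reading off the two components and multiplying numerator and denominator by $-1$ to match the stated sign convention yields exactly \eqref{mle-est-2d-formula-1} and \eqref{mle-est-2d-formula-2}. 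The hypothesis $\int_0^T \alpha_i(t,Y_t)\,dt > 0$ plays here the same role as in the 1D case, ensuring the quantities entering the estimator are well defined.

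The algebra of the quadratic form and Cramer's rule is routine; the step I would be most careful about is the first one, namely justifying that Theorem \ref{th:equivmeasure} and the ensuing likelihood formula extend to a drift that is an affine combination of two non-anticipative functions — checking the integrability hypotheses transfer and that $I_t(\alpha_1\alpha_2)$ is finite so the Hessian argument is rigorous — and stating explicitly the non-proportionality of $\alpha_1,\alpha_2$ needed both for uniqueness of the maximizer and for non-vanishing of the denominator in \eqref{mle-est-2d-formula-1}--\eqref{mle-est-2d-formula-2}. For the \OU\ process with linear trend one has $\alpha_1 \equiv 1$ and $\alpha_2(t,Y_t) = -Y_t$, which are generically not proportional, and the resulting formulas \eqref{mle-est-2d-formula-1}--\eqref{mle-est-2d-formula-2} specialize to \eqref{diffusion-line-mu-est}--\eqref{diffusion-line-a-est} after an Itô-formula substitution for $I_Y(\alpha_2) = -\int_0^T Y_t\,dY_t$.
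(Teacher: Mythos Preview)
Your proposal is correct and follows essentially the same route as the paper: substitute the two-parameter drift into the Radon--Nikodym formula \eqref{radon-nykodim-deriv}, check concavity of the log-likelihood via Cauchy--Schwarz on the Hessian, and solve the first-order conditions. The only cosmetic difference is that the paper verifies negative definiteness by computing the two eigenvalues of $H$ explicitly before invoking Cauchy--Schwarz, whereas you go straight to the determinant; your treatment of the integrability hypotheses and the non-degeneracy condition is in fact more careful than the paper's.
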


\begin{proof}
By substituting the drift term in \eqref{diffusiontwodrifts} into \eqref{radon-nykodim-deriv} one obtains

$$ \dfrac{d \mu_Y}{d \mu_W} (Y)  = \mathrm{exp}\left(\theta_1 I_Y(\alpha_1) + \theta_2 I_Y(\alpha_2)  - \dfrac{\theta_1^2}{2} I_t(\alpha_1^2) - \dfrac{\theta_2^2}{2} I_t(\alpha_2^2)  - \theta_1 \theta_2 I_t(\alpha_1 \alpha_2)\right)$$

Let $l_\theta(Y)$ denote the log-likelihood. To ensure the concavity of $l_\theta$ one must verify that its Hessian matrix $H = (\partial_{i, j} l_\theta), 1 \leq i, j \leq 2$ is definite negative.

Deriving the Hessian yields to 

$$ H = -  \begin{pmatrix} I_t(\alpha_1^2) & I_t(\alpha_1 \alpha_2) \\
I_t(\alpha_1 \alpha_2) & I_t(\alpha_2^2)

\end{pmatrix}$$

hence of the form $$ H = - \begin{pmatrix} A & C \\ C & B \end{pmatrix}.$$

The eigenvalues of $H$ are given by 

$$ \lambda_1 = -\dfrac{1}{2} (A + B + \sqrt{(A - B)^2 + 4C^2}) < 0\text{ ,}$$ 

$$ \lambda_2 = \dfrac{1}{2} (- A - B + \sqrt{(A - B)^2 + 4C^2})\text{ .}$$  

For its largest eigenvalue $\lambda_2$ to be negative is equivalent to $C^2 < AB$, that is $I_t(\alpha_1 \alpha_2)^2 < I_t(\alpha_1^2) I_t(\alpha_2^2)$. This latter expression is equivalent to the Cauchy-Schwartz inequality. Hence these conditions are $\mathbb{P}_\theta$-a.s. verified, ensuring the concavity of $l_\theta$. Finally one can deduce the equations \eqref{mle-est-2d-formula-1} and \eqref{mle-est-2d-formula-2} by solving the first order conditions $\partial_i l_\theta (\theta_1^*, \theta_2^*) = 0,  i= 1, 2$

\end{proof}


We now consider the diffusion:

\begin{equation}
\label{diffusion-line-dynamics}
dY_t  = \mu dt - a Y_t dt + dW_t
\end{equation}

From the results above the MLE estimators for both $\mu$ and $a$ are given by:

\begin{equation}
\label{diffusion-line-mu-est}
\hat{\mu} = \dfrac{\frac{1}{2}(Y_T^2 - T) \int_0^T Y_tdt - (Y_T - Y_0) \int_0^T Y_t^2dt}{(\int_0^T Y_t dt)^2 - T \int_0^T Y_t^2dt}
\end{equation}

\begin{equation}
\label{diffusion-line-a-est}
\hat{a} = \dfrac{\frac{1}{2}T (Y_T^2 - Y_0^2 - T)  - (Y_T - Y_0) \int_0^T Y_tdt}{(\int_0^T Y_t dt)^2 - T \int_0^T Y_t^2dt}
\end{equation}

To obtain these formulas we use the formulas \eqref{mle-est-2d-formula-1} and \eqref{mle-est-2d-formula-2} with $\alpha_1 (t, x) = 1$, $\theta_1 = \mu$, $\alpha_2(t, x) = -x$ and $\theta_2 = a$. Using Ito's Lemma one can show that:

$$ I_Y(\alpha_1) =  \int_0^T dY_t = Y_T - Y_0$$
$$ I_Y(\alpha_2) = -\int_0^T Y_tdY_t = \frac{1}{2}(T - Y_T^2+ Y_{0}^2)$$
$$I_t(\alpha_1 \alpha_2) = - \int_0^T Y_t dt$$
$$I_t(\alpha_1^2) = T$$
$$I_t(\alpha_2^2) = \int_0^T Y_t^2dt.$$

\clearpage
\section{Asymptotic state behaviour in a simple case}\label{annex:convexcell}
We prove in this annex the results stated in the worked example of section \ref{subsection:motivation}.
We consider the following process\footnote{the state of a vanilla RNN with identity activation function, no biases and constrained weights}
$$
h_t = W_{hh}\, h_{t-1} + Y_t \, w_{ih}
$$
where $W_{hh} \in \mathcal{M}_{n}(\R)$ and $w_{ih} \in \R^n$ and
\begin{eqnarray*}
\forall i, j \in \llbracket 1, n \rrbracket \quad (W_{hh})_{(i, j)} &>& 0 \\
\forall i \in \llbracket 1, n \rrbracket \quad (w_{ih})_{i} &>& 0 \\
\|W_{hh}\|_1 = \max_{1 \leq j \leq n} \sum_{i=1}^m | W_{hh})_{(i, j)} | &<&1\\
\|W_{hh}\|_\infty = \max_{1 \leq i \leq m} \sum _{j=1}^n | W_{hh})_{(i, j)} | &<&1
\end{eqnarray*}

Let's consider $Y_t$ a simple noisy line process $Y_t = Y_0 + \mu \, t + \epsilon_t$ we have:
$$
h_t = W_{hh}\, h_{t-1} + \mu\, t \, w_{ih} + \epsilon_t \, w_{ih} + Y_0 \, w_{ih}
$$
$\mu$ being the trend and $\{\epsilon_t\}_{t\geq 0}$ an i.i.d noise process with expectation equal to zero and unit variance.\\
\textbf{Without trend i.e. $\mu=0$},  we have
$$
h_{t} = W_{hh}\, h_{t-1} + Y_0 \, w_{ih} + \epsilon_t \, w_{ih}\text{ .}
$$
We note $\lambda_{pf}>0$ the Perron-Frobenius eigenvalue of $W_{hh}$. All eigenvalues of $W_{hh}$ different from $\lambda_{pf}$ satisfy $|\lambda| < \lambda_{pf}$. If $y=(y_i)>0$ is a corresponding eigenvector then 
$$
\forall i \in \llbracket 1, n \rrbracket \quad \sum_{j} W_{ij} y_j = \lambda_{pf} y_i\text{ .}
$$
Noting $y_\text{max}=\max(\{y_1, y_2, \ldots, y_n\})$ 
$$
\forall i \in \llbracket 1, n \rrbracket \quad  \lambda_{pf} y_i = \sum_{j} W_{ij} y_j \leq \sum_{j} W_{ij} y_\text{max} < y_\text{max}\text{ .}
$$
So $\lambda_{pf} < 1$ and $I-W_{hh}$ is bijective. We can define $h^{*}$ by
$$
(I-W_{hh})h^{*} = Y_0 w_{ih}
$$
Then,
$$
h_{t} - h^{*} = W_{hh}\, (\hat{h}_{t-1} - h^{*}) + \epsilon_t w_{ih}\text{ .}
$$
Simplifying notations with $\overset{\sim}{h}_t = h_{t} - h^{*}$, $W = W_{hh}$ and $\overset{\sim}{\epsilon}_t = \epsilon_t w_{ih}$
$$
\forall t\in \llbracket 1, n \rrbracket\quad
\overset{\sim}{h}_{t+1} = W^t \overset{\sim}{h}_{1} + \sum_{k=0}^{t-1} W^{t-k} \overset{\sim}{\epsilon}_k
$$
where $\overset{\sim}{\epsilon}_t$ has zero mean and variance equal to $\omega= w_{ih} w_{ih}\tran$.\\
Defining
$$
R_t = \sum_{k=1}^{t} W^{t-k} \overset{\sim}{\epsilon}_k\text{ ,}
$$
$\E(R_t)=0$ and $V(R_t) = \sum_{k=0}^{t-1} W^{k} \omega (W\tran )^{k}$ which is absolutely summable as
$$
\|W^{k} \omega (W\tran )^{k}\|_{\infty} \leq \|\omega\| (\|W\|_{\infty}\|W\|_{1})^k\text{ .}
$$
$(W^t \overset{\sim}{h}_{1})_{t\in \N}$ converges almost surely to 0. As $t$ goes towards infinity $h_t$ becomes centered around a random variable of finite variance.


\textbf{With a trend i.e. $\mu \neq 0$}, noting $h^0$ the previous no trend solution and $h^{\mu}$ the process with a trend $\mu$, and defining
$$
\delta_t = h^{\mu}_t - h^{0}_t
$$
it is easily seen that
$$
\delta_{t} = W_{hh}^{t}\delta_{0} + W_{hh}\, \delta_{t-1} + \mu\, t \, w_{ih}\text{ .}
$$
Noting that $\delta_{0}=0$ 
$$
\delta_{t} = \mu  \sum_{k=0}^{t} k \underbrace{W^{t-k}_{hh} \, w_{ih}}_{P_{k, t}}
$$
$P_{k, t}\geq 0$, so if $\mu>0$
$$
\delta_{t} \geq \mu\,t\,w_{ih} \underset{t\to +\infty}{\longrightarrow}+\infty\text{ .}
$$
Similarly, if $\mu <0$,
$$
\delta_{t} \leq \mu\,t\,w_{ih} \underset{t\to +\infty}{\longrightarrow}-\infty\text{ .}
$$
So,
$$
h^{\mu}_t = \underset{\text{finite variance}}{h^{0}_t}+ \underset{\text{diverge almost surely}}{\delta_t}\text{ .}
$$

\clearpage
\section{Visual representation of hidden state}\label{annex:hiddenstateplot}
We plot the hidden state $h_t\in\R^5$ of a vanilla network previously trained on a randomly chosen dynamic. The hidden state is obtained running through three different Noisy Line Processes (respectively up trending, without trend and down trending). We see, on figure \ref{fig:show_hidden_convex_cell}, that the state goes right as time goes for the down trend, stays around zero without trend and goes left for the uptrend. The state has been projected into the plane of the first two eigenvectors to get a two dimensional plot.
\begin{figure}[h]
	\centering
	\includegraphics[scale=0.5]{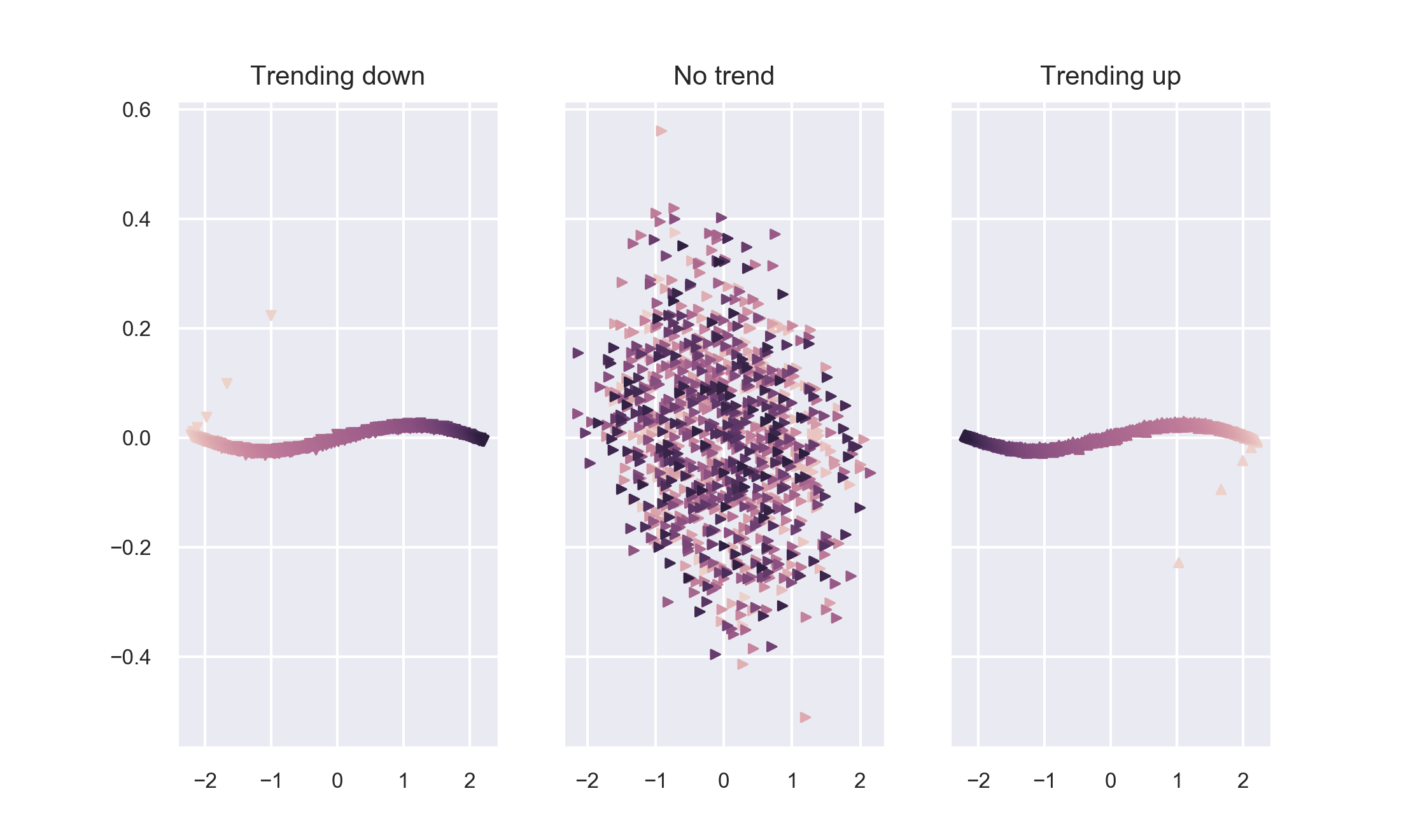}
	\caption{Hidden state for up, no and down trend. Colour goes darker with time}
	\label{fig:show_hidden_convex_cell}
\end{figure}

\clearpage
\section{Technical details}\label{annex:rnntraining}

\subsection{RNN training details}\label{annex:rnntrainingdetails}
\begin{table}[h]
\begin{subfigure}[b]{\textwidth}
\centering
\begin{tabular}{|l|r|}
	\hline
	RNN type & Vanilla, LSTM, GRU \\
	Number of layers & 1, 2\\
	Learning rate & 0.01, 0.1, 1.0 \\
	Dropout & 0, 0.1 \\
	\hline
\end{tabular}
\caption{Training hyper-parameters for RNNs}
\label{tab:training_schedule_rnn}
\end{subfigure}
\\
\begin{subfigure}[b]{\textwidth}
	\centering
	\begin{tabular}{|l|r|}
		\hline
		Time-series dynamic& Piecewise Noisy Line\\
		&Piecewise \OU \\
		&Markovian Switch\\
		&Mixed Dynamic\\
		Global ``noise level'' & 1, 5 \\
		Number of samples & 1,000 \\
		\hline
	\end{tabular}
	\caption{Training hyper-parameters for time series dynamics}
	\label{tab:training_schedule_ts}
\end{subfigure}
\\
\begin{subfigure}[b]{\textwidth}
	\centering
\begin{tabular}{|l|c|c|}
	\hline
	Dynamic & Min Length & Max Length \\
	\hline
	Piecewise Noisy Line & 50 & 1000 \\
	Piecewise \OU  & 80 & 2400 \\
	Markovian Switch & 500 & 1000 \\
	Mixed Dynamic & 1000& 1000\\
	\hline
\end{tabular}
	\caption{Sequence lengths}
\label{tab:training_schedule_len_seq}
\end{subfigure}
\end{table}
\pagebreak

\subsection{RNN empirical findings}\label{annex:rnnempiricalfindings}
\begin{table}[h]
	\centering
	\scalebox{0.7}{
	\begin{tabular}{|l|c|c|c|c|c|c|}
		\hline
		Feature[Modality]&Coefficient&Std Err&t-statistic&P-value&\multicolumn{2}{|c|}{5\% confidence interval}\\
		\hline
Intercept&0.4840&0.002&260.514&0.000&0.480&0.488\\
Training dynamic[Markovian Switch]&0.0066&0.002&3.852&0.000&0.003&0.010\\
Training dynamic[\OU]&0.0290&0.002&16.880&0.000&0.026&0.032\\
Training dynamic[Noisy Line]&0.0017&0.002&1.002&0.316&-0.002&0.005\\
Net type[LSTM]&0.0366&0.002&24.416&0.000&0.034&0.040\\
Net type[Vanilla]&0.1742&0.002&116.118&0.000&0.171&0.177\\
Optimization[RMSP]&0.0234&0.001&19.489&0.000&0.021&0.026\\
Testing[\OU]&-0.1006&0.001&-68.400&0.000&-0.103&-0.098\\
Testing[Noisy Line]&-0.0357&0.001&-24.271&0.000&-0.039&-0.033\\
		\hline
	\end{tabular}}
	\caption{Loss OLS left hand column is the feature column with the specified modality in bracket}
	\label{tab:annexsimpleols}
\end{table}

\subsection{Vanilla structure no better than a dummy predictor}\label{annex:vanillaisdummy}
\begin{figure}[h]
	\centering
	\includegraphics[scale=0.4]{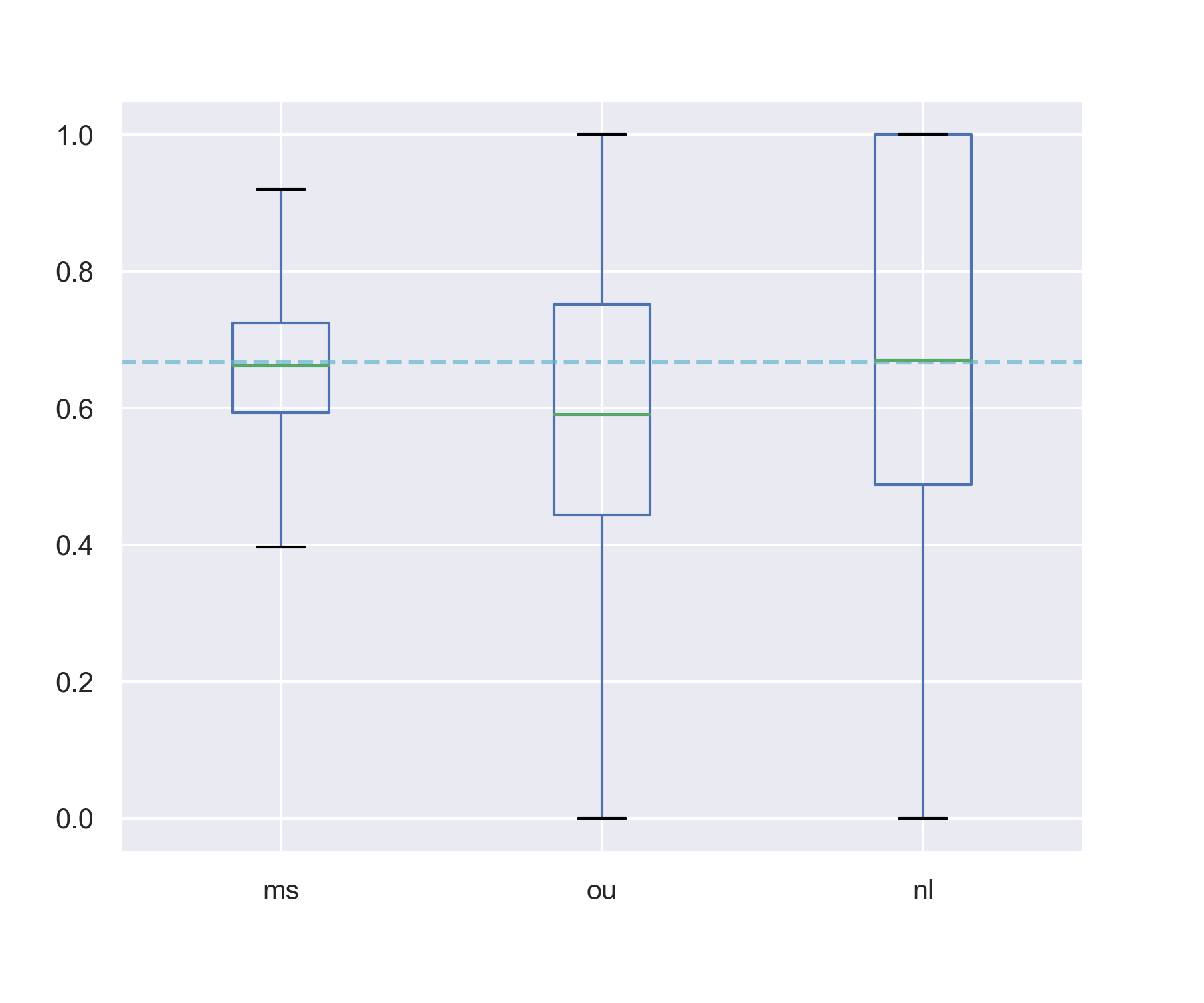}
	\caption{Loss of Vanilla Structure depending on the test dynamic category. nl is Noisy Line, ou \OU\space Process and ms Markovian Switch. Cyan dashed line is the average loss of a dummy classifier}
	\label{fig:vanillaisdummyplot}
\end{figure}
\pagebreak

\end{appendices}

\clearpage
\bibliographystyle{plain}
\bibliography{trend_rnn_miot_drigout}

\end{document}